\newif\ifproofs
\newif\ifextended
\setlist[description]{font=\normalfont}
\def\BibTeX{{\rm B\kern-.05em{\sc i\kern-.025em b}\kern-.08em
    T\kern-.1667em\lower.7ex\hbox{E}\kern-.125emX}}
\let\oldnl\nl
\newcommand{\nonl}{\renewcommand{\nl}{\let\nl\oldnl}}
\newcommand{\lm}{Laplace mechanism\xspace}
\newcommand{\tm}{multivariate $t$-distribution mechanism\xspace}
\newcommand{\indep}{\perp \!\!\! \perp}
\newcommand{\cD}{\mathcal{D}}
\newcommand{\cL}{\mathcal{L}}
\newcommand{\cM}{\mathcal{M}}
\newcommand{\cP}{\mathcal{P}}
\newcommand{\cX}{\mathcal{X}}
\newcommand{\cO}{\mathcal{O}}
\newcommand{\R}{\mathbb{R}}
\newcommand{\C}{\mathbb{C}}
\newcommand{\N}{\mathbb{N}}
\newcommand{\Z}{\mathbb{Z}}
\newcommand{\D}{\mathds{D}}
\newcommand{\E}{\mathds{E}}
\newcommand{\vw}{\boldsymbol{w}}
\newcommand{\vG}{\boldsymbol{G}}
\newcommand{\vx}{\boldsymbol{x}}
\newcommand{\vX}{\boldsymbol{X}}
\newcommand{\vz}{\boldsymbol{z}}
\newcommand{\vl}{\boldsymbol{l}}
\newcommand{\vd}{\boldsymbol{d}}
\newcommand{\ve}{\boldsymbol{e}}
\newcommand{\vn}{\boldsymbol{n}}
\newcommand{\vt}{\boldsymbol{t}}
\newcommand{\vc}{\boldsymbol{c}}
\newcommand{\vh}{\boldsymbol{h}}
\newcommand{\vv}{\boldsymbol{v}}
\newcommand{\vg}{\boldsymbol{g}}
\DeclareMathOperator*{\sign}{\text{sign}}
\DeclareMathOperator*{\argmin}{arg\,min}
\DeclareMathOperator*{\lap}{\text{Laplace}}
\DeclareMathOperator*{\var}{\text{Var}}
\DeclarePairedDelimiter\abs{\lvert}{\rvert}
\DeclareMathOperator{\tr}{tr}
\acrodef{ml}[ML]{machine learning} 
\acrodef{fl}[FL]{federated learning}
\acrodef{ldp}[LDP]{local differential privacy}
\acrodef{dp}[DP]{differential privacy}
\acrodef{dq}[DQ]{dithered quantization}
\acrodef{sdq}[SDQ]{subtractive dithered quantization} 
\acrodef{sgd}[SGD]{stochastic gradient descent}
\acrodef{jopeq}[JoPEQ]{joint privacy enhancement and quantization}
\acrodef{fa}[FedAvg]{federated averaging} 
\acrodef{ppn}[PPN]{privacy preserving noise} 
\acrodef{iid}[i.i.d.]{independent and identically distributed} 
\acrodef{rv}[RV]{random variable}
\acrodef{mlp}[MLP]{multi-layer perceptron}
\acrodef{cnn}[CNN]{convolutional neural network}
\acrodef{snr}[SNR]{signal-to-noise ratio}
\acrodef{dlg}[DLG]{deep leakage from gradients}
\acrodef{idlg}[iDLG]{improved deep leakage from gradients}
\acrodef{ssim}[SSIM]{structural similarity index measure}
\acrodef{mse}[MSE]{mean squared error}
\acrodef{mvu}[MVU]{Minimum Variance Unbiased}
\newtheorem{theorem}{Theorem}[section]
\newtheorem{definition}[theorem]{Definition}
\newtheorem{corollary}{Corollary}
\newtheorem{lemma}{Lemma}
\newcommand{\figSpace}{\vspace{-0.3cm}}
\begin{document}
\title{Joint Privacy Enhancement and Quantization in Federated Learning}

\author{
	\IEEEauthorblockN{Natalie Lang, Elad Sofer, Tomer Shaked, and Nir Shlezinger\\
	} 
	\thanks{Parts of this work were presented at the 2022 IEEE International Symposium on Information Theory as the paper \cite{lang2022joint}. The authors are with the School of ECE, Ben-Gurion University of the Negev, Be'er-Sheva, Israel (e-mails: \{langn, eladsofe, tosha\}@post.bgu.ac.il; nirshl@bgu.ac.il).  	
	}
	\vspace{-0.5cm}
}

\maketitle

\begin{abstract}
\Ac{fl} is an emerging paradigm for training machine learning models using possibly private data available at edge devices. The distributed operation of FL gives rise to challenges that are not encountered in centralized machine learning, including the need to preserve the privacy of the local datasets, and  the communication load due to the repeated exchange of updated models. These challenges are often tackled individually via techniques that induce some distortion on the updated models, e.g., \ac{ldp} mechanisms and lossy compression. In this work we propose a method coined {\em \ac{jopeq}}, which jointly implements lossy compression and privacy enhancement in \ac{fl} settings. In particular, \ac{jopeq} utilizes vector quantization based on random lattice, a universal compression technique whose byproduct distortion is statistically equivalent to additive noise. This distortion is leveraged to enhance privacy by augmenting the model updates with dedicated multivariate \acl{ppn}. We show that \ac{jopeq} simultaneously quantizes data according to a required bit-rate while holding a desired privacy level, without notably affecting the utility of the learned model. This is shown via analytical \ac{ldp} guarantees, distortion and convergence bounds derivation, and numerical studies. Finally, we empirically assert that \ac{jopeq} demolishes common attacks known to exploit privacy leakage.
\end{abstract}
\acresetall 

\section{Introduction}
The unprecedented success of deep learning highly relies on the availability of data, often gathered by edge devices such as mobile phones, sensors, and vehicles. As such, data may be private, and there is a growing need to avoid leakage of private data while still being able to use it for training neural networks. {\em\Ac{fl}}~\cite{mcmahan2017communication,kairouz2021advances, li2020federated, gafni2021federated} is an emerging paradigm for training on edge devices, exploiting their computational capabilities \cite{chen2019deep}. \ac{fl} avoids sharing the users' data, as training is performed locally with periodic centralized aggregations of the models orchestrated by a server.

Learning in a federated manner is subject to several core challenges that are not encountered in traditional centralized \acl{ml} \cite{gafni2021federated,li2020federated}.
These include a repeated  exchange of highly parameterized models between the server and the devices, possibly over  rate-limited channels, notably loading the communication infrastructure and often resulting in considerable  delays \cite{chen2021communication}.  
An additional challenge stems from the need to guarantee that the exchanged models preserve privacy with respect to the local datasets. It was recently shown that while learning on the edge does not involve data sharing, one can still extract private information, or even reconstruct the raw data, from the exchanged models updates, if these are not properly protected \cite{zhu2020deep,zhao2020idlg,huang2021evaluating,yin2021see}.

Various methods have been proposed to tackle the above challenges:
The communication overhead is often relaxed by reducing the volume of the model updates via lossy compression. 
This can be achieved by having each user transmit only part of its model updates by sparsifying or sub-sampling \cite{lin2017deep,alistarh2018convergence,han2020adaptive,konevcny2016federated, hardy2017distributed, aji2017sparse}.
An alternative approach discretizes the model updates via quantization, such that it is conveyed using a small number of bits \cite{alistarh2017qsgd,reisizadeh2020fedpaq,bernstein2018signsgd,shlezinger2020uveqfed,karimireddy2019error,horvath2019natural}.
As for privacy preservation, the \ac{ldp} framework is commonly adopted. \ac{ldp} quantifies privacy leakage of a single data sample when some function of the local datasets, e.g., a trained model, is publicly available \cite{kim2021federated}. 
\ac{ldp} can be boosted by corrupting the model updates with \ac{ppn} \cite{wei2020federated}, via splitting/shuffling \cite{sun2020ldp} or dimension selection \cite{liu2020fedsel}, and by exploiting the noise induced when communicating over a shared wireless channel \cite{seif2020wireless, liu2020privacy}. Prior works also studied the trade-offs between user privacy, utility, and transmission rate; providing utility \cite{kim2021federated} and convergence \cite{liu2022loss} bounds.

Several recent studies consider both challenges of compression and privacy in \ac{fl}.
The works \cite{lyu2021dp, zhang2022leveraging} quantize the local gradient with a differentially private 1-bit compressor. That is, the probability of each coordinate of the gradients to be encoded into one of two possible dictionary words is designed to satisfy the Gaussian mechanism; thus the communication burden is reduced while \ac{dp} is simultaneously guaranteed. However, these methods utilize fixed 1-bit quantizers, and cannot be configured into adaptable communication budget once available.
In \cite{amiri2021compressive}, the authors combine privacy and compression by converting the distortion induced by random lattice coding into a Gaussian noise which holds \ac{dp}. To do so, they perturb the gradient by Gaussian noise prior to quantization, and the overall procedure then holds \ac{dp} according to composition theorem of \ac{dp}.
The above works consider \ac{dp} enhancements, providing users with privacy guarantees from untruthful adversaries, but fail to so for a potential untrusted \ac{fl} third-party server; as can be guaranteed by \ac{ldp}. 

The recent work \cite{chaudhuri2022privacy} proposed a compression method that holds \ac{ldp}. This scheme, referred to in \cite{chaudhuri2022privacy} as \ac{mvu}, utilizes dithered quantization to first transform the model updates into discrete-valued representations, that are subsequently perturbated to hold \ac{ldp}. Yet, this scheme does not leverage the distortion already induced by quantization to enhance privacy via a joint design, but rather tackle each challenge separately in a cascaded fashion. Furthermore, the existing methods all individually perturb each sample of the model updates, thus not leveraging inter-sample correlations to enhance privacy. The fact that both compression and \ac{ldp} enhancement typically involve the addition of some distortion to the model updates vector via, e.g., quantization or \ac{ppn}, motivates the study of unified  multivariate schemes for jointly boosting \ac{ldp} and compression while maintaining the system utility in \ac{fl}.

In this work we propose a dual-function mechanism for enhancing privacy while compressing the model updates in \ac{fl}. Our proposed \ac{jopeq} utilizes universal vector quantization techniques \cite{zamir1992universal,zamir1996lattice}, building upon their ability to transform the quantization distortion into an additive noise term with controllable variance regardless of the quantized data. We  harness the resulting distortion as means to contribute to \ac{ldp} enhancement, combining it with a dedicated additive \ac{ppn} mechanism. For the latter, we specifically employ the highly useful yet less common approach of multivariate \ac{ppn} \cite{reimherr2019elliptical}, which can be naturally incorporated into established low-distortion vector quantization techniques.
\ac{jopeq} results in the local models recovered by the server simultaneously satisfying both desired \ac{ldp} guarantees as well as bit-rate constraints, and does so without notably affecting the utility of the learned model. This is theoretically validated by both analytical LDP guarantees and convergence bound derivation. These findings are also consistently observed in our numerical study, which considers the federated training of different model architectures.

We design \ac{jopeq} by extending the recently proposed \ac{fl} quantization method  of \cite{shlezinger2020uveqfed}, which employs \ac{sdq} using randomized lattices for the local model weights. \ac{jopeq} combines \ac{sdq} with a low-power \ac{ppn}, carefully designed to yield an output that realizes an established \ac{ldp} mechanism. We consider  the multivariate \tm as well as the common scalar \lm; both result in the local models being simultaneously quantized and private. We prove that the information recovered at the server side rigorously satisfies \ac{ldp} guarantees, and characterize the regimes for which privacy can be achieved based solely on \ac{sdq}, i.e., while adding only a negligible level of artificial noise. Our numerical results demonstrate that \ac{jopeq} achieves a lower level of overall distortion and yields more accurate models compared to using separate independent mechanisms for achieving compression and privacy, as well as to the scheme of \cite{chaudhuri2022privacy}. Furthermore, we empirically demonstrate that \ac{jopeq} is privacy preserving  by demolishing the  \ac{dlg} \cite{zhu2020deep} and \ac{idlg} \cite{zhao2020idlg} model inversion attacks, known to exploit privacy leakage and recover data samples from model updates.

The rest of this paper is organized as follows: Section~\ref{sec:sys_model_prelim} briefly reviews the \ac{fl} system model and  related preliminaries in quantization and privacy. Section~\ref{sec:method} presents \ac{jopeq}, theoretically analyzes its \ac{ldp} guarantees and compression properties, deriving of distortion and convergence bounds. \ac{jopeq} is numerically evaluated in Section~\ref{sec:experiments}, while Section~\ref{sec:conclusions} provides concluding remarks.

Throughout this paper, we use boldface lower-case letters for vectors, e.g., $\vx$, boldface upper-case letters for matrices, e.g., $\vX$, and calligraphic letters for sets, e.g., $\cX$. The stochastic expectation, trace, variance, and $\ell_1,\ell_2$ norms are denoted by $\E\{\cdot\}$, $\tr(\cdot)$, $\var(\cdot)$ and ${\|\cdot\|}_1,\|\cdot\|$, respectively, while $\C$ and $\R$ are the sets of complex and real numbers, respectively.

\section{System Model and Preliminaries}\label{sec:sys_model_prelim}
\ifproofs
In this section we present the system model of \ac{fl}  with  quantization and \ac{ldp} constraints. We begin by recalling some relevant basics in \ac{fl} and quantization in Subsections~\ref{subsec:FL}-\ref{subsec:QFL} respectively, after which we provide \ac{ldp} preliminaries in Subsection~\ref{subsec:privacy}, and formulate our problem in Subsection~\ref{subsec:prebelm_def}.
\fi

\subsection{Federated Learning}\label{subsec:FL}
In \ac{fl}, a server trains a model parameterized by  $\vw\in \R^m$ using data available at a group of $K$ users indexed by $1,\ldots, K$. These datasets, denoted $\cD_1,\dots, \cD_K$, are assumed to be private. Thus, as opposed to conventional centralized learning where the server can use $\cD=\bigcup_{k=1}^K \cD_k$ to train $\vw$, in \ac{fl} the users cannot share their data with the server.  
\ifproofs
Let $F_k(\vw)$ be the empirical risk of a model $\vw$ evaluated over the dataset $\cD_k$. The training goal is  to recover the $m\times 1$ optimal weights vector $\vw^{\rm opt}$ satisfying
\begin{equation}\label{eq:w_opt_def}
    \vw^{\rm opt} = \argmin_{\vw} \left\{F(\vw)\triangleq \sum_{k=1}^K \alpha_k F_k\left(\vw\right)\right\},
\end{equation}
\fi
where the averaging coefficients are typically set to $\alpha_k=\frac{\abs{\cD_k}}{\abs{\cD}}$.

Generally speaking, \ac{fl} involves the distribution of a global model to the users. Each user locally trains this global model using its own data, and sends back the model update~\cite{gafni2021federated}. The users thus do not directly expose their private data as  training  is performed locally. The server then aggregates the models into an updated global model and the procedure repeats iteratively. 

Arguably the most common \ac{fl} scheme is \ac{fa}~\cite{mcmahan2017communication}, where the server updates the global model by averaging the local models. Letting $\vw_t$ denote the global parameters vector available at the server at time step $t$, the server shares $\vw_t$ with the users, who each performs $\tau$ training iterations using its local $\cD_k$ to update $\vw_t$ into $\vw^k_{t+\tau}$. The user then shares with the server the model update, i.e., $\vh^k_{t+\tau} = \vw^k_{t+\tau} - \vw_t$. 
The server in turn sets the global model to be 
\begin{align}\label{eq:fl_update}
    \vw_{t+\tau} \triangleq \vw_t + \sum_{k=1}^K \alpha_k \vh^k_{t+\tau}=\sum_{k=1}^K \alpha_k \vw^k_{t+\tau},
\end{align}
where it is assumed for simplicity that all users participate in each \ac{fl} round. The updated global model is again  distributed to the users and the learning procedure continues. 

\ifproofs
When the local optimization at the users side is carried out using \ac{sgd}, then \ac{fa} specializes the {\em local \ac{sgd}} method~\cite{stich2018local}. In this case, each user of index $k$ sets $\vw_t^k = \vw_t$, and updates its local model via 
\begin{align}\label{eq:sgd}
    \vw^k_{t+1} \xleftarrow{}  \vw^k_{t} -\eta_t \nabla F^{i^k_t}_k\left(\vw^k_t\right),
\end{align}
where $i^k_t$ is the sample index chosen uniformly from $\cD_k$, and $\eta_t$ is the step-size. 
\fi
The fact that \ac{fl} involves the users sharing their updated local models $\vw^k_{t+\tau}$ with the server gives rise to the core challenges in terms of communication overload and privacy considerations. This motivates the incorporation of quantization and privacy enhancement techniques,  discussed in the following subsections.

\subsection{Quantization Preliminaries}\label{subsec:QFL}
Vector quantization is the encoding of a set of continuous-amplitude quantities into a finite-bit representation~\cite{gray1998quantization}. The design of vector quantizers often relies on statistical modelling of the vector to be quantized~\cite[Ch. 23]{polyanskiy2014lecture}, which is likely to be unavailable in \ac{fl} \cite{shlezinger2020uveqfed}. Vector quantizers which are invariant of the underlying distribution are referred to as {\em universal vector quantizers}; a leading approach to implement such quantizers is based on lattice quantization~\cite{zamir1992universal}:
\begin{definition}[Lattice Quantizer]\label{def:lattice_Q}
A lattice quantizer of dimension $L \in \Z^+$ and  generator matrix $\vG\ \in \R^{L\times L}$ maps  $\vx\in \R^L$ into a discrete representation $Q_\cL(\vx)$ by selecting the nearest point in the lattice  $\cL \triangleq \{ \vG\vl: \vl\in\Z^L\}$, i.e., 
\begin{equation}
    Q_\cL(\vx) = \mathop{\arg \min}_{\vz \in \cL}\|\vx-\vz\|. 
\end{equation}
\end{definition}
To apply $Q_{\cL}$ to a vector $\vx \in \R^{ML}$, it is divided into $[\vx_1,\ldots,\vx_M]^T$, and each sub-vector is quantized separately. 
A lattice $\cL$ partitions $\R^L$ into cells centered around the lattice points, where the basic cell is $\cP_0=\{\vx:Q_\cL(\vx)=\mathbf{0}\}$. The number of lattice points in $\cL$ is countable but infinite. Thus, to obtain a finite-bit representation, it is common to restrict $\cL$ to include only  points in a given sphere of radius $\gamma$, and the number of lattice points dictates the number of bits per sample $R$. 
\ifproofs
An event in which the input does not reside in this sphere is referred to as {\em overloading}, and quantizers are typically designed to avoid this~\cite{gray1998quantization}. 
In the special case of $L=1$ with $\vG=\Delta_{\rm Q} >0$, $Q_\cL(\cdot)$ specializes conventional scalar uniform quantization $Q(\cdot)$:
\begin{definition}[Uniform Quantizer]\label{def:scalar_uniform_Q}
A mid-tread scalar uniform quantizer with support $\gamma$ and spacing $\Delta_{\rm Q}$ is defined as
\begin{align}\label{eq:scalar_uniform_Q}
    Q(x) = 
    \begin{cases}			
    \Delta_{\rm Q}\left\lfloor\frac{x}{\Delta_{\rm Q}} + \frac{1}{2}\right\rfloor & \text{if } x<\abs{\gamma}\\
	\sign(x)\cdot \gamma & \text{else},
	\end{cases}
\end{align}
where $R~=~\log_2 \big(2\gamma / \Delta_{\rm Q}\big)$ bits are used to represent $x$.
\end{definition}
\fi

The straightforward application of lattice quantization yields a distortion term  $\ve \triangleq Q_{\cL}(\vx) - \vx$ that is deterministically determined by $\vx$. It is thus often combined with {\em probabilistic quantization} techniques, and particularly with \ac{dq} and \ac{sdq}~\cite{lipshitz1992quantization, gray1993dithered}, defined below:
\begin{definition}[\ac{dq}]\label{def:DQ}
The dithered lattice quantization of $\vx \in \R^L$ is given by 
\begin{align}
    Q^{\rm DQ}_{\cL}(\vx)=Q_\cL(\vx+\vd),
\end{align}
where $\vd$ denotes the dither signal, which is independent of $\vx$ and is uniformly distributed over the basic lattice cell $\cP_0$. 
\end{definition}
\begin{definition}[\ac{sdq}]\label{def:SDQ}
The subtractive dithered lattice quantization of $\vx \in \R^L$ is given by 
\begin{align}
    Q^{\rm SDQ}_{\cL}(\vx)=Q^{\rm DQ}_{\cL}(\vx)-\vd= Q_\cL(\vx+\vd)-\vd.
\end{align}
\end{definition}
A key property of \ac{sdq} stems from the fact that its resulting distortion can be made independent of the quantized value. This arises from the following theorem, stated in \cite{gray1993dithered,zamir1996lattice}:
\begin{theorem}\label{thm:SDQ}
For a set of $L\times 1$ vectors $\{\vx_i\}_{i=1}^M$ within the lattice support, i.e., $\Pr(\|\vx_i\|\leq\gamma)=1$, the distortion vectors $\ve_i \triangleq  Q^{\rm SDQ}_{\cL}(\vx_i) - \vx_i$
are \acs{iid}, uniformly distributed over $\cP_0$ and mutually independent of $\{\vx_i\}_{i=1}^M$.
\end{theorem}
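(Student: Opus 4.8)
The plan is to reduce the statement to the classical dither lemma (Schuchman-type condition) and then lift it to the joint law over the $M$ sub-vectors. I write $\vd_1,\dots,\vd_M$ for the dithers, which are i.i.d., uniform over $\cP_0$, and independent of $\{\vx_i\}_{i=1}^M$. First I would rewrite the \ac{sdq} distortion in a form that exposes why subtracting the dither is the crucial ingredient:
\[
\ve_i = Q^{\rm SDQ}_\cL(\vx_i) - \vx_i = Q_\cL(\vx_i+\vd_i) - \vd_i - \vx_i = -\big[(\vx_i+\vd_i) - Q_\cL(\vx_i+\vd_i)\big].
\]
Introduce the modulo-lattice (quantization-error) map $[\vy]\triangleq \vy - Q_\cL(\vy)$. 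Under the no-overload hypothesis $\Pr(\|\vx_i\|\le\gamma)=1$, this map takes values in the basic cell $\cP_0$ and coincides with the quotient map $\R^L \to \R^L/\cL$ once $\cP_0$ is identified with the torus $\mathbb{T}\triangleq\R^L/\cL$. Hence $\ve_i = -[\vx_i+\vd_i]$.

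Second, the core step is to show that for each fixed realization $\vx_i=\vx_i^0$, the conditional law of $[\vx_i^0+\vd_i]$ is uniform over $\cP_0$, i.e.\ it does not depend on $\vx_i^0$. Since $\cP_0$ is a fundamental domain for $\cL$, the uniform distribution on $\cP_0$ corresponds to the Haar (normalized Lebesgue) measure on $\mathbb{T}$; the map $\vd_i \mapsto [\vx_i^0+\vd_i]$ is exactly translation by $\vx_i^0$ on $\mathbb{T}$ (followed by the identification with $\cP_0$), and Haar measure is translation-invariant. Therefore $[\vx_i^0+\vd_i]$ is again uniform on $\cP_0$ for every $\vx_i^0$; equivalently $\ve_i = -[\vx_i^0+\vd_i]$ is uniform on $-\cP_0=\cP_0$ (the Voronoi cell is centrally symmetric), with a law that is free of $\vx_i^0$. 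Because the conditional law of $\ve_i$ given $\vx_i$ is this fixed uniform law, $\ve_i \indep \vx_i$. An equivalent route is the characteristic-function computation: $\E\{e^{j\vw^{\!\top}\ve_i}\mid \vx_i\}$ factors through the Fourier transform of the dither, which vanishes at every nonzero point of the dual lattice, so by Poisson summation it equals the characteristic function of $\mathrm{Unif}(\cP_0)$ regardless of $\vx_i$.

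Third, I would upgrade to the joint statement. Conditioned on the whole collection $\{\vx_i\}_{i=1}^M$, the dithers $\{\vd_i\}_{i=1}^M$ remain mutually independent, hence $\{\ve_i = -[\vx_i+\vd_i]\}_{i=1}^M$ are conditionally independent, each $\mathrm{Unif}(\cP_0)$ by the core step. Thus the conditional joint law of $(\ve_1,\dots,\ve_M)$ given $\{\vx_i\}_{i=1}^M$ is a product of $M$ copies of $\mathrm{Unif}(\cP_0)$, which does not depend on $\{\vx_i\}_{i=1}^M$; therefore $(\ve_1,\dots,\ve_M)$ is independent of $\{\vx_i\}_{i=1}^M$ and its entries are i.i.d.\ uniform over $\cP_0$, as claimed.

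The main obstacle is making the core step fully rigorous: identifying ``uniform over the basic cell'' with Haar measure on $\R^L/\cL$ and invoking translation invariance requires that $\cP_0$ genuinely tiles $\R^L$ under $\cL$-translations up to a Lebesgue-null boundary, and that the no-overload assumption prevents the finite-radius truncation of $\cL$ (the clipping at $\gamma$) from altering the modulo-lattice identity $Q_\cL(\vy)-\vy\in\cP_0$ on the relevant inputs $\vy=\vx_i+\vd_i$ --- one may need $\vx_i$ to lie strictly inside the support so that $\vx_i+\vd_i$ stays in the unclipped region, or to absorb this margin into the definition of $\gamma$. The boundary-of-cell measure-zero technicalities and the tie-breaking rule in $Q_\cL$ are harmless but should be acknowledged; everything else is bookkeeping.
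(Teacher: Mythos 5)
Your proof is correct: the paper does not prove Theorem~\ref{thm:SDQ} itself but quotes it from \cite{gray1993dithered,zamir1996lattice}, and your argument --- writing $\ve_i=-\bigl[(\vx_i+\vd_i)-Q_\cL(\vx_i+\vd_i)\bigr]$, identifying uniformity on $\cP_0$ with Haar measure on $\R^L/\cL$ and using translation invariance (with the characteristic-function/Poisson-summation route as the equivalent alternative), then passing to the joint law via conditional independence of the dithers --- is exactly the standard argument in those references. Your closing caveat about overload (that $\vx_i+\vd_i$ must remain in the untruncated region, which strictly needs a margin beyond $\Pr(\|\vx_i\|\le\gamma)=1$) is the right technicality to flag and is likewise glossed over in the paper's statement.
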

Theorem~\ref{thm:SDQ} implies that when  the quantizer is not {overloaded}, the distortion induced by \ac{sdq} can be effectively modeled as white noise uniformly distributed over $\cP_0$.  

\subsection{Local Differential Privacy Preliminaries}\label{subsec:privacy}
\ifproofs
One of the main motivations for \ac{fl} is the need to preserve the privacy of the users' data. Nonetheless, the concealment of the dataset of the $k$th user, $\mathcal{D}_k$, in favor of sharing the weights trained using this data $\vw_t^k$ was shown to be potentially leaky~\cite{zhu2020deep,zhao2020idlg,huang2021evaluating,yin2021see}. Therefore, to satisfy the privacy requirements of \ac{fl}, initiated privacy mechanisms are necessary~\cite{lyu2021dp}. 
\fi

Considering a users-server setting, privacy is commonly quantified in terms of \ac{dp}~\cite{yang2017survey, abowd2018us} and \ac{ldp}~\cite{kasiviswanathan2011can, wang2020federated}. 
While both provide users with privacy guarantees from untruthful adversaries, the former further assumes a trusted third-party server. As this assumption does not necessarily hold for \ac{fl}, to alleviate the privacy concerns of each user the commonly adopted framework is that of \ac{ldp}, defined as follows: 
\begin{definition}[$\epsilon$-\ac{ldp}~\cite{wang2020comprehensive}]\label{def:LDP}
A randomized  mechanism $\cM$ satisfies $\epsilon$-\ac{ldp} if for any pairs of input values $v,v'$ in the domain of $\cM$ and for any possible output $y$ in it, it holds that
\begin{align}
    \Pr [\cM(v)=y] \leq e^\epsilon \Pr [\cM(v')=y].
\end{align}
\end{definition}
Definition~\ref{def:LDP} can be interpreted as a bundle between stochasticity and privacy: if two different inputs are probable (up to some margin or privacy budget) to be associated with the same algorithm output, then privacy is preserved as each data sample is not uniquely distinguishable. A smaller $\epsilon$ means stronger privacy protection, and vice versa.
A common mechanism to achieve $\epsilon$-\ac{ldp} is based on Laplacian \ac{ppn}. By letting $\lap\left(\mu,b\right)$ be the Laplace distribution with location $\mu$ and scale $b$, the \lm is defined as follows:
\begin{theorem}[\cite{dwork2016calibrating}]\label{thm:lm}
Given any function $f:\D\to\R^d$ where $\D$ is a domain of datasets, the \lm defined as 
\begin{equation}\label{eq:lm}
    \cM^{\rm Laplace}\left(f(x),\epsilon\right)=f(x)+
    {\left[z_1,\dots,z_d\right]}^T,
\end{equation}
is $\epsilon$-\ac{ldp}. In \eqref{eq:lm},  $z_i\overset{\acs{iid}}{\sim}\lap\left(0, \Delta f/\epsilon \right)$,  with 
\begin{equation*}
 \Delta f\triangleq\max_{x,y\in\D} \left\|f(x)-f(y)\right\|.   
\end{equation*} 
\end{theorem}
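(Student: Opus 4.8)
The plan is to exploit the fact that the \lm admits an explicit probability density, so that verifying Definition~\ref{def:LDP} reduces to a pointwise bound on a likelihood ratio. Writing $v$ for a generic element of $\D$ (in place of $x$), the output $\cM^{\rm Laplace}(f(v),\epsilon)$ equals $f(v)$ plus a vector of \acs{iid} $\lap(0,\Delta f/\epsilon)$ entries, and therefore has density
\begin{equation*}
 p_v(y) = \prod_{i=1}^{d} \frac{\epsilon}{2\Delta f}\exp\!\left(-\frac{\epsilon\,\abs{y_i - f(v)_i}}{\Delta f}\right) = \left(\frac{\epsilon}{2\Delta f}\right)^{\!d}\exp\!\left(-\frac{\epsilon}{\Delta f}\,\|y - f(v)\|_1\right).
\end{equation*}

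First I would fix two inputs $v,v'$ and an output $y$ and form the ratio $p_v(y)/p_{v'}(y)$; the normalizing constants cancel, leaving $\exp\!\big(\tfrac{\epsilon}{\Delta f}(\|y-f(v')\|_1 - \|y-f(v)\|_1)\big)$. Next I would bound the exponent using the reverse triangle inequality, $\|y-f(v')\|_1 - \|y-f(v)\|_1 \le \|f(v)-f(v')\|_1$, which yields $p_v(y)/p_{v'}(y)\le \exp\!\big(\tfrac{\epsilon}{\Delta f}\|f(v)-f(v')\|_1\big)$. Finally, by the definition of $\Delta f$ as the maximum of $\|f(\cdot)-f(\cdot)\|$ over all input pairs in $\D$, the exponent is at most $\epsilon$, so $p_v(y)/p_{v'}(y)\le e^{\epsilon}$ for every $y$; integrating this pointwise bound over an arbitrary measurable output event converts it into $\Pr[\cM(v)=y]\le e^{\epsilon}\Pr[\cM(v')=y]$, which is exactly Definition~\ref{def:LDP}.

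This is essentially the textbook proof of the \lm, and it transfers to the \ac{ldp} setting without modification precisely because $\Delta f$ here is already defined as a maximum over \emph{all} pairs in $\D$, not merely over neighbouring datasets. The step requiring the most care — the main obstacle — is the norm bookkeeping: the product form of the \acs{iid} Laplace density makes the sensitivity enter through the $\ell_1$ distance $\|f(v)-f(v')\|_1$, whereas the theorem writes $\Delta f$ with the unsubscripted (i.e.\ $\ell_2$) norm. A clean argument therefore requires interpreting $\Delta f$ as the $\ell_1$ global sensitivity of $f$ — equivalently, absorbing the dimension-dependent constant relating the $\ell_1$ and $\ell_2$ norms into the noise scale. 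A secondary, routine point is measure-theoretic: Definition~\ref{def:LDP}'s ``any possible output $y$'' must be read as any measurable event in the output space, which is exactly what the integration step above accommodates.
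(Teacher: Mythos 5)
Your proposal is correct: it is the standard density-ratio argument (cancel the normalizers, apply the reverse triangle inequality coordinate-wise, bound the exponent by the global sensitivity, then integrate the pointwise bound over any measurable output event). The paper does not prove this theorem at all --- it is stated as a known preliminary and attributed to the cited reference, whose proof is exactly the one you reproduce --- so there is nothing in the paper to diverge from. Your side remark about the norm is a legitimate catch about the \emph{statement} rather than a defect of your proof: the paper's notation reserves the unsubscripted $\|\cdot\|$ for the $\ell_2$ norm, whereas the Laplace mechanism with scale $\Delta f/\epsilon$ guarantees $\epsilon$-\ac{ldp} only when $\Delta f$ is the $\ell_1$ sensitivity (with the $\ell_2$ sensitivity one would in general lose a factor of up to $\sqrt{d}$ in the privacy budget), so reading $\Delta f$ as the $\ell_1$ sensitivity, as in the original formulation, is the right resolution.
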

Theorem \ref{thm:lm} concerns multivariate data, yet uses \acs{iid} univariate Laplace random perturbations, and thus fails to exploit spatial correlations in the data for privacy. In particular, the statistical dependence between different variables in each data sample can be exploited for lowering the overall added distortion while keeping the privacy budget unchanged \cite{reimherr2019elliptical,chaudhuri2011differentially,kifer2014pufferfish,awan2021structure}. This is achieved using high-dimensional privacy preserving mechanisms, which engage multivariate \acp{ppn}.
While the straightforward multivariate Laplacian \ac{ppn} fails to satisfy $\epsilon$-\ac{ldp} when $d>1$ \cite{reimherr2019elliptical},
one can guarantee privacy by introducing  multivariate perturbations obeying the $t$-distribution. In particular, letting $\mathbf{t}^d_\nu(\boldsymbol \mu,\mathbf\Sigma)$ denote the $d$-dimensional $t$-distribution with location, scale matrix, and degrees-of-freedom $\boldsymbol \mu$, $\mathbf\Sigma$, and $\nu$ respectively, such a \ac{ppn} satisfies \ac{ldp} guarantees as stated in the following theorem:
\begin{theorem}[\cite{reimherr2019elliptical}]\label{thm:tm}
Given any function $f:\D\to\R^d$, 
define
\begin{align}\label{eq:TdistMech}
    \cM^{\rm t\text{-}dist}\left(f(x),\epsilon\right)=f(x)+\mathbf{t};\quad
    \mathbf{t}{\sim} \mathbf{t}^d_\nu(\boldsymbol \mu,\mathbf\Sigma).
\end{align}
Then, $\cM^{\rm t\text{-}dist}$ satisfies $\epsilon$-LDP, for
\begin{equation}\label{eq:t_mech_eps}
    \exp(\epsilon)=
    {\left[
    \frac{1+c^2/\nu}{1+(c-\Delta)^2/\nu}
    \right]}^{(\nu+d)^2}, 
\end{equation}
where here $c\triangleq\frac{1}{2}\left(\Delta+\sqrt{\Delta^2+4\nu}\right)$ and 
\begin{equation*}
    \Delta\triangleq\max_{x,y\in\D} \|\mathbf\Sigma^{-1/2}\left(f(x)-f(y)\right)\|.   
\end{equation*}
\end{theorem}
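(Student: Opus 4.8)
The plan is to argue directly from the density of the multivariate $t$-distribution, reducing the $\epsilon$-LDP inequality of Definition~\ref{def:LDP} to a one-dimensional optimization. Fix two inputs and write $v=f(x)$, $v'=f(x')$. Since $\cM^{\rm t\text{-}dist}(f(x),\epsilon)=f(x)+\mathbf{t}$ with $\mathbf{t}\sim\mathbf{t}^d_\nu(\boldsymbol\mu,\mathbf\Sigma)$, the output density is the $t$-density recentered at $v$, namely $p_v(y)=C_{\nu,d}\,|\mathbf\Sigma|^{-1/2}\bigl(1+\tfrac1\nu(y-v-\boldsymbol\mu)^{T}\mathbf\Sigma^{-1}(y-v-\boldsymbol\mu)\bigr)^{-(\nu+d)/2}$, where $C_{\nu,d}$ does not depend on $v$. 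Forming the likelihood ratio, the factor $C_{\nu,d}|\mathbf\Sigma|^{-1/2}$ cancels, and the location $\boldsymbol\mu$ also cancels (it shifts both candidate densities by the same amount), so we may take $\boldsymbol\mu=\mathbf0$ and obtain
\begin{equation*}
\frac{p_v(y)}{p_{v'}(y)}=\left(\frac{1+\tfrac1\nu\|\mathbf\Sigma^{-1/2}(y-v')\|^2}{1+\tfrac1\nu\|\mathbf\Sigma^{-1/2}(y-v)\|^2}\right)^{(\nu+d)/2}.
\end{equation*}
A point worth noting up front: the $t$-density is strictly positive on all of $\R^d$, so this ratio is always finite — this is precisely what fails for the plain multivariate Laplace perturbation, and is why a finite $\epsilon$ is attainable here for $d>1$.

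Next I would whiten the coordinates, setting $z=\mathbf\Sigma^{-1/2}y$, $a=\|z-\mathbf\Sigma^{-1/2}v\|$ and $b=\|z-\mathbf\Sigma^{-1/2}v'\|$, so the displayed ratio is $\bigl((\nu+b^2)/(\nu+a^2)\bigr)^{(\nu+d)/2}$. By the triangle inequality and the definition of $\Delta$, every reachable pair obeys $b\le a+\|\mathbf\Sigma^{-1/2}(v-v')\|\le a+\Delta$; since $t\mapsto(\nu+t^2)/(\nu+a^2)$ is increasing on $t\ge0$ (and the ratio is $<1$ whenever $b<a$), the ratio is bounded above by $\sup_{a\ge0}\bigl((\nu+(a+\Delta)^2)/(\nu+a^2)\bigr)^{(\nu+d)/2}$, a scalar problem. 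For tightness, which the exact constant in \eqref{eq:t_mech_eps} implicitly claims, one checks that picking $v,v'$ attaining $\Delta$ and $z$ collinear with $\mathbf\Sigma^{-1/2}v,\mathbf\Sigma^{-1/2}v'$ on the far side of $\mathbf\Sigma^{-1/2}v$ makes $b=a+\Delta$ realizable for each $a\ge0$.

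Finally, the scalar maximization: differentiating $g(a)=(\nu+(a+\Delta)^2)/(\nu+a^2)$ and clearing denominators, $g'(a)=0$ reduces to $a^2+a\Delta-\nu=0$, whose nonnegative root is $a^\star=\tfrac12(-\Delta+\sqrt{\Delta^2+4\nu})$; since $c=\tfrac12(\Delta+\sqrt{\Delta^2+4\nu})=a^\star+\Delta$, this gives $g(a^\star)=(\nu+c^2)/(\nu+(c-\Delta)^2)=(1+c^2/\nu)/(1+(c-\Delta)^2/\nu)$. Comparing with the endpoints $g(0)=1+\Delta^2/\nu$ and $\lim_{a\to\infty}g(a)=1$ confirms $a^\star$ is the maximizer, so $\sup_{y}p_v(y)/p_{v'}(y)=\bigl((1+c^2/\nu)/(1+(c-\Delta)^2/\nu)\bigr)^{(\nu+d)/2}$, which by Definition~\ref{def:LDP} yields $\epsilon$-LDP with the stated $\exp(\epsilon)$ (the exponent obtained here is the tail exponent $(\nu+d)/2$ of the $t$-density, to be matched against the power recorded in \eqref{eq:t_mech_eps}). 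The main obstacle is the reduction in the second paragraph: one must verify that the multidimensional supremum over $y\in\R^d$ genuinely collapses to the collinear one-dimensional configuration — i.e., that the triangle-inequality constraint is simultaneously the binding one and (for the worst-case pair $v,v'$) saturable across all $a\ge0$ — after which only elementary calculus remains.
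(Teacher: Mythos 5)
Your argument is correct, and it is essentially the proof of the result as it appears in the source: the paper itself does not prove Theorem~\ref{thm:tm} but imports it from \cite{reimherr2019elliptical}, where the guarantee is obtained exactly as you do --- by bounding the ratio of the shifted multivariate $t$-densities, whitening by $\mathbf\Sigma^{-1/2}$, using the triangle inequality to reduce to the scalar problem $\sup_{a\ge 0}\bigl(\nu+(a+\Delta)^2\bigr)/\bigl(\nu+a^2\bigr)$, and solving $a^2+a\Delta-\nu=0$ so that the maximizer satisfies $a^\star+\Delta=c$. One substantive point you left open deserves a definite answer: the exponent $(\nu+d)/2$ that your calculation produces is the correct one (it is the tail exponent of the $t$-density and is the power appearing in \cite{reimherr2019elliptical}); the power $(\nu+d)^2$ printed in \eqref{eq:t_mech_eps} is a typographical error, so there is nothing further to ``match.'' Finally, note that only the upper bound on the likelihood ratio is needed for $\epsilon$-LDP; your collinear-configuration argument showing the bound is attained is correct but optional.
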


The mapping \eqref{eq:TdistMech} is referred to as~\tm. Both Laplace and multivariate $t$-distribution mechanisms use \ac{ppn} to guarantee privacy. While the former is more commonly used in  \ac{fl}, e.g., \cite{zhao2020local},  the latter further leverages spatial correlation in multivariate data to enable privacy enhancement with lower power perturbations \cite{reimherr2019elliptical}. 

\subsection{Problem Formulation}\label{subsec:prebelm_def}
Our goal is to design a mechanism which jointly meets both quantization and privacy constraints in \ac{fl}. Since users are unlikely to have prior knowledge of the  model parameters distribution, we are interested in methods which are universal. Such schemes can be formulated as mappings of the local updates $\vh_t^k\in \R^d$ at the $k$th user into $\tilde{\vh}_t^k\in \R^d$ available at the server, while meeting the following requirements:
\begin{enumerate}[label={\em R\arabic*}]
    \item \label{itm:ldp} {\em Privacy}: the mapping of $\vh_t^k$ into $\tilde{\vh}_t^k$ must be $\epsilon$-\ac{ldp} with respect to $\mathcal{D}_k$ for a given privacy budget $\epsilon$. 
    \item \label{itm:rate} {\em Compression}: the conveying of $\tilde{\vh}_t^k$ from the user to the server should involve at most $R$ bits per sample. 
    \item \label{itm:universal} {\em Universality}: the scheme must be invariant to the distribution of $\vh_t^k$. 
\end{enumerate}
Notice that by \ref{itm:ldp} we are focusing on achieving \ac{ldp} in each time instance, i.e., for each $t$. This is known to enable privacy enhancement in multi-round \ac{fl} training procedures \cite{sun2020ldp}.

Evidently, requirements \ref{itm:ldp}-\ref{itm:universal} can be satisfied by first adding \ac{ppn} to meet \ref{itm:ldp}, followed by universal quantization to satisfy \ref{itm:rate}-\ref{itm:universal}, as both techniques are invariant to the distribution of $\vh_t^k$.
However, these \ac{fl} quantization and privacy boosting schemes can be modelled as corrupting the weights with some random noise (e.g., Thm.~\ref{thm:SDQ} for \ac{sdq}, Thm.~\ref{thm:lm} for \lm, and Thm.~\ref{thm:tm} for \tm). Thus, using separate mechanisms may result in an overall noise which degrades the accuracy of the trained model beyond that needed to meet \ref{itm:ldp}-\ref{itm:universal}. Based on these observations, in the sequel we study a joint design.

\begin{figure*}
\centering
\includegraphics[width=\textwidth]{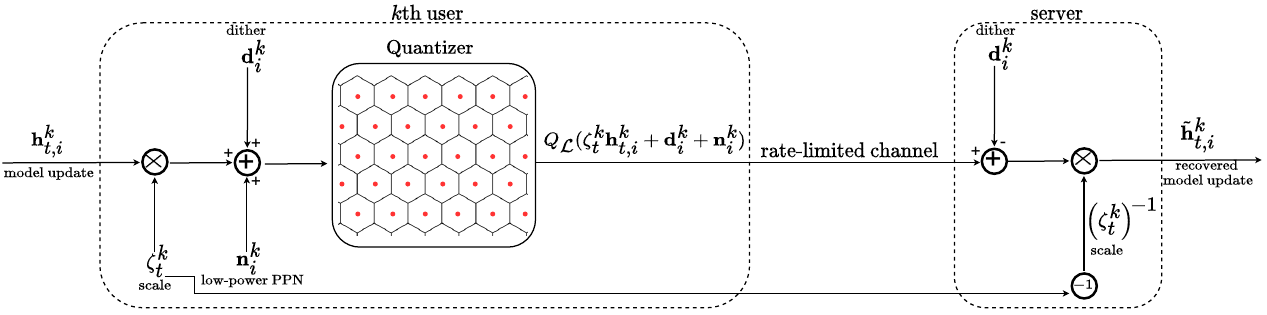}
\caption{Overview of \ac{jopeq}, where the left dashed box represents the encoding by the $k$th user, while the right dashed box describes the server decoding.}
\label{fig:jopeq}
\end{figure*}

\section{Joint Privacy Enhancement and Quantization}\label{sec:method}
In this section we introduce \ac{jopeq}, deriving its steps in Subsection~\ref{subsec:algorithm}, after which  we provide an analysis and discuss its properties in Subsections~\ref{subsec:analysis}-\ref{subsec:discussion}, respectively. 

\subsection{JoPEQ}\label{subsec:algorithm}
We design \ac{jopeq} to exploit the inherent stochasticity of probabilistic quantized \ac{fl} to enhance privacy, relaxing the need to manually introduce possibly dominant \ac{ppn} separately. 
We tackle both \ac{fl} challenges of communication overload and privacy consideration simultaneously by employing the unique distortion properties of \ac{sdq} for implementing established privacy enhancing mechanisms. This is achieved by first corrupting the model updates $\vh_t^k$  with a low-power \ac{ppn} followed by compressing via \ac{sdq}. The \ac{ppn} is carefully designed such that the local model available at the server holds a desirable \ac{ldp} preserving mechanism, e.g., \tm or \lm, yielding an output that is both quantized and privacy preserving.

\ac{jopeq} is divided into two main stages of {\em encoding} and {\em decoding}, carried out by the user and the server, respectively, with a preliminary initialization stage conducted when the \ac{fl} procedure commences. These steps are detailed below and illustrated in Fig.~\ref{fig:jopeq}. As the mechanism description is identical for each of the local users participating in the \ac{fl} process, we henceforth focus only on the $k$th user.

\subsubsection{Initialization}
The first step sets of the privacy budget $\epsilon$, dictated by the application requirements, and the compression scheme parameters, that are based on \cite{shlezinger2020uveqfed}. The latter includes sharing a common seed $s_k$ between each user and the server, that serves as a source of common randomness; fixing the lattice dimension $L$ and its radius $\gamma$; and forming the generator matrix $\mathbf{G}$ (that dictates the lattice $\mathcal{L}$ and its basic cell $\mathcal{P}_0$), which is determined by the bit-rate $R$ \cite[Ch. 2]{conway2013sphere}.

\subsubsection{Encoding}\label{subsec:encoding} The encoding stage takes place at the user sides when the round of local training is finished and the model updates $\vh_t^k$ are ready to be transmitted. The updates are encoded into finite bit representations using a combination of \ac{sdq} and the addition of a carefully designed \ac{ppn} to result with an $\epsilon$-\ac{ldp} mechanism.

{\bf Quantization:}
\ac{jopeq} builds upon the compression scheme of \cite{shlezinger2020uveqfed}, which is implemented via scaling followed by  \ac{sdq} (Def.~\ref{def:SDQ}) with lattice quantization (Def.~\ref{def:lattice_Q}). In particular, this operation involves the scaling  of $\vh^k_t$ by a coefficient $\zeta_t^k$ and its division into $M\triangleq \left\lceil\frac{d}{L}\right\rceil$ distinct $L\times 1$ sub-vectors, applying the $L$-dimensional lattice quantizer on each $\{\zeta_t^k\vh_{t,i}^k\}_{i=1}^M$. The scaling coefficient $\zeta_t^k$ guarantees that the quantizer is unlikely to be overloaded, i.e., that the sub-vectors lie inside the unit $L$-ball with high probability. A candidate setting is thus $\zeta_t^k = \left(\frac{3}{\sqrt{M}}\|\vh^k_t\|\right)^{-1}$, approaching three times the standard deviation of the sub-vectors when they are zero-mean and \acs{iid}, thus assuring no overloading with probability of over $88\%$ by Chebyshev's inequality~\cite{ferentios1982tcebycheff}.  

To implement \ac{sdq}, the user randomizes the dither signals $\vd_i^k\in\R^L$ independently of $\vh^k_{t,i}$ for each $i\in\{1,\ldots,M\}$, where $\vd_i^k$ is uniformly distributed over $\cP_0$. Unlike \cite{shlezinger2020uveqfed}, which considers solely compression, \ac{jopeq} does not directly quantize $\vh^k_{t,i}$, but rather first distorts it with \ac{ppn} $\vn^k_i$ for enhancing privacy. The vectors which are conveyed from the $k$th user to the server are thus $\{Q_\cL(\zeta_t^k\vh^k_{t,i} + \vd^k_i + \vn^k_i)\}_{i=1}^M$ at a bit-rate of at most $R$ bits per sample due the lattice quantizer $Q_\cL(\cdot)$; the associated overhead in conveying the scalar $\zeta_t^k$ is assumed to be negligible compared to conveying $\vh_t^k$. 

{\bf Privacy enhancement:}
The \ac{ppn} signals $\{\vn^k_i\}^M_{i=1}$ are randomized by the $k$th user. Unlike the dither, which the server can also generate with the shared seed, the \ac{ppn} uses a local seed and thus cannot be recreated by the server. 
The \ac{ppn}  $\vn^k_i$ is generated for each $i\in \{1,\dots,M\}$ in an \acs{iid} fashion from a multivariate distribution with the characteristic  function $\Phi_{\vn}:\R^L\mapsto \C$. We propose two settings for $\Phi_{\vn}$ in Theorems~\ref{thm:jopeq_holds_tm}-\ref{thm:jopeq_holds_lm}, for which \ac{jopeq} realizes a \tm and a \lm, respectively.

\subsubsection{Decoding} The server receives the \ac{dq} of $\{\zeta_t^k\vh^k_{t,i}+\vn^k_i\}_{i=1}^M$, which can be modeled as a distorted scaled version of $\{\vh^k_{t,i}\}_{i=1}^M$. Since the server has access to the shared seed  $s_k$ used to generate the dither and also to the scaling coefficient $\zeta_t^k$, this distortion can reduced by implementing \ac{sdq} \cite{gray1993dithered} and re-scaling. This results in the server obtaining $\tilde\vh^k_t$ as the stacking of $\{\tilde\vh^k_{t,i}\}_{i=1}^M$ given by
\begin{align}
\tilde{\vh}^k_{t,i}&=
    (\zeta_t^k)^{-1}\cdot\left( Q_\cL(\zeta_t^k \vh^k_{t,i}+\vn^k_i+\vd^k_i) - \vd^k_i\right)\label{eq:jopeq_middle_result}\\
    &=(\zeta_t^k)^{-1}\cdot Q_\cL^{\rm SDQ}(\zeta_t^k\vh^k_{t,i}+\vn^k_i).\label{eq:jopeq_result}
\end{align}

The decoding procedure of \ac{jopeq} is carried out at the server side, from whom privacy should be preserved. Decoding involves dither subtraction and inverse scaling. Dither subtraction is known to reduce the variance of the overall distortion \cite{gray1993dithered}, and is thus beneficial in terms of improving the accuracy of the global model aggregated via \ac{fa} \cite{shlezinger2020uveqfed}. Yet, since \eqref{eq:jopeq_middle_result} is less distorted, and thus potentially more leaky, than $\{(\zeta_t^k)^{-1}\cdot Q_\cL(\zeta_t^k\vh^k_{t,i}+\vn^k_i+\vd^k_i)\}_{i=1}^M$ due to dither subtraction, we focus our privacy enhancement and analysis on $\tilde\vh^k_t$. The overall algorithm is summarized below as Algorithm~\ref{alg:jopeq}.

\SetKwBlock{Initialization}{Initialization:}{end}
\SetKwBlock{User}{Encode (at the $k$th user side, for each $i$):}{end}
\SetKwBlock{Server}{Decode (at the server side, for each $i$):}{end}
\begin{algorithm}
\caption{\ac{jopeq}}
\label{alg:jopeq}
\KwData{Local model updates at the $k$th user at time step $t$, $\vh^k_t$, decomposed into $\{\vh^k_{t,i}\}_{i=1}^M$\;}
\KwResult{Representation of local model updates $\tilde\vh_t^k$ to be aggregated using \ac{fa}\;}
\nonl  \Initialization{Shared seed $s_k$, lattice $\cL$, and privacy budget $\epsilon$\;}
\nonl \User{Randomize the dither signal $\vd^k_i$, uniformly distributed over $\cP_0$, using seed $s_k$\;
Randomize the \ac{ppn} signal $\vn^k_i$, obeying the characteristic function $ \Phi_{\vn}$\;
Convey $Q_\cL(\zeta_t^k \vh^k_{t,i}+\vd^k_i+\vn^k_i)$ and $\zeta_t^k$ to server\;}
\nonl \Server{Recreate the dither signal $\vd^k_i$ using seed $s_k$\;
Compute $\tilde\vh^k_{t,i}$ using \eqref{eq:jopeq_result}\;}
\end{algorithm}

\subsection{Analysis}\label{subsec:analysis}
Next, we analyze the performance of \ac{jopeq}, characterizing its privacy and compression guarantees, after which we study its associated distortion and convergence properties.

\subsubsection{Privacy}
The encoding procedure of \ac{jopeq} is designed to jointly support privacy by adding \ac{ppn} and compression via lattice quantization. When scaling by $\zeta_t^k$ yields vanishing overloading probability, the setting of the \ac{ppn} characteristic function can guarantee that $\tilde\vh^k_{t,i}$ is equivalent to the model updates $\vh^k_{t,i}$ corrupted by a desired distortion profile. This is stated in the following theorem:
\begin{theorem}\label{thm:jopeq_holds_tm}
Given a fixed $\zeta_t^k$ that yields vanishing overloading probability, let the \ac{ppn} be generated with the characteristic function $\Phi_{\vn}(\vt)$, which for all $\vt\in\R^L$ is defined as:
\begin{multline}\label{eq:ppn_high_dim_tm}
    \Phi_{\vn}(\vt)=
    {\left(
    \int_{\cP_0}
    \frac{\cos\left({\vt}^T\ve\right)}{\abs{\cP_0}}d\ve\right)}^{-1}\times\\
    \exp(i\vt^{T}\boldsymbol{\mu})
    \frac{
    \|\sqrt{\nu}\boldsymbol{\Sigma}^{1/2}\boldsymbol t\|^{\nu/2}}
    {2^{\nu/2-1}\Gamma(\nu/2)}K_{\nu/2}
    \left(\|\sqrt{\nu}\boldsymbol{\Sigma}^{1/2}\boldsymbol t\|\right),
\end{multline}
where $\Gamma(\cdot), K(\cdot)$ are the gamma and 
the third kind modified Bessel functions respectively. 
Then, the distortion at the output of \ac{jopeq}, $\tilde{\vh}^k_{t,i} - \vh^k_{t,i}$,
is mutually independent of $\vh^k_{t,i}$ and obeys an \acs{iid} $\mathbf{t}^d_\nu(\boldsymbol \mu,\mathbf\Sigma)$ distribution, which satisfies \eqref{eq:t_mech_eps}.
\end{theorem}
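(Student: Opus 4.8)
The plan is to show that, under the stated non-overloading assumption, \ac{jopeq} yields an output of the form $\tilde{\vh}^k_{t,i}=\vh^k_{t,i}+\vt_i$ in which $\vt_i$ is distributed according to $\mathbf{t}^d_\nu(\boldsymbol\mu,\mathbf\Sigma)$ and is statistically independent of $\vh^k_{t,i}$; the \ac{ldp} claim \eqref{eq:t_mech_eps} then follows immediately by recognizing this map as the \tm of Theorem~\ref{thm:tm}. The first step removes the quantization nonlinearity: since $\zeta_t^k$ is chosen so that the overloading probability vanishes, the vectors $\{\zeta_t^k\vh^k_{t,i}+\vn^k_i\}_{i=1}^M$ lie within the lattice support with probability one, so Theorem~\ref{thm:SDQ} applies with these vectors playing the role of $\{\vx_i\}$. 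It gives $Q^{\rm SDQ}_\cL(\zeta_t^k\vh^k_{t,i}+\vn^k_i)=\zeta_t^k\vh^k_{t,i}+\vn^k_i+\ve_i$, where the $\{\ve_i\}$ are \acs{iid}, uniform over $\cP_0$, and independent of $\{\zeta_t^k\vh^k_{t,i}+\vn^k_i\}$; conditioning on $(\vh^k_{t,i},\vn^k_i)$ and using that the dithers are independent of both upgrades this to joint independence of $\{\ve_i\}$ from $\{(\vh^k_{t,i},\vn^k_i)\}$. Substituting into \eqref{eq:jopeq_result} yields $\tilde{\vh}^k_{t,i}-\vh^k_{t,i}=(\zeta_t^k)^{-1}(\vn^k_i+\ve_i)$, which is therefore independent of $\vh^k_{t,i}$; it remains to identify its law, up to the deterministic and publicly known rescaling by $(\zeta_t^k)^{-1}$ that is absorbed into $\boldsymbol\mu,\mathbf\Sigma$ and is immaterial for \ac{ldp}.

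The second step is characteristic-function matching. Because $\vn^k_i$ and $\ve_i$ are independent, the characteristic function of $\vn^k_i+\ve_i$ factors as $\Phi_{\vn}(\vt)\,\Phi_{\ve}(\vt)$. The term $\ve_i$ is uniform over the basic cell $\cP_0$, which is symmetric about the origin, so $\Phi_{\ve}(\vt)=\int_{\cP_0}\frac{e^{i\vt^T\ve}}{\abs{\cP_0}}\,d\ve=\int_{\cP_0}\frac{\cos(\vt^T\ve)}{\abs{\cP_0}}\,d\ve$ is real valued. On the other hand, the characteristic function of $\mathbf{t}^d_\nu(\boldsymbol\mu,\mathbf\Sigma)$ equals $\exp(i\vt^T\boldsymbol\mu)\,\|\sqrt{\nu}\mathbf\Sigma^{1/2}\vt\|^{\nu/2}K_{\nu/2}(\|\sqrt{\nu}\mathbf\Sigma^{1/2}\vt\|)\,/\,(2^{\nu/2-1}\Gamma(\nu/2))$, which one obtains from the Gaussian scale-mixture representation of the multivariate $t$ together with the standard integral $\int_0^\infty w^{\alpha-1}e^{-\beta w-\gamma/w}dw=2(\gamma/\beta)^{\alpha/2}K_\alpha(2\sqrt{\beta\gamma})$. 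Imposing that $\Phi_{\vn}(\vt)\,\Phi_{\ve}(\vt)$ equal this target and solving for $\Phi_{\vn}(\vt)$ reproduces exactly \eqref{eq:ppn_high_dim_tm}. Hence, with the \ac{ppn} drawn from the law with characteristic function \eqref{eq:ppn_high_dim_tm}, the convolution $\vn^k_i+\ve_i$ carries the prescribed multivariate-$t$ characteristic function, and by the uniqueness theorem for characteristic functions it has the multivariate-$t$ law; combined with the first step this establishes the distributional claim, while mutual independence of $\{\tilde{\vh}^k_{t,i}-\vh^k_{t,i}\}_{i=1}^M$ across $i$ follows since the $\vn^k_i$ are \acs{iid} and, conditioned on all $(\vh^k_{t,j},\vn^k_j)$, the $\ve_i$ are \acs{iid} uniform on $\cP_0$. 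Finally, viewing $\vh^k_{t,i}$ as a deterministic function of $\cD_k$, the mapping $\cD_k\mapsto\tilde{\vh}^k_{t,i}$ coincides with $\cM^{\rm t\text{-}dist}(\cdot,\epsilon)$, so Theorem~\ref{thm:tm} gives $\epsilon$-\ac{ldp} with $\epsilon$ as in \eqref{eq:t_mech_eps}, and this transfers to $\tilde{\vh}^k_t$.

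The step I expect to be the crux is the legitimacy of \eqref{eq:ppn_high_dim_tm} as a genuine characteristic function, i.e.\ that a random vector $\vn^k_i$ realizing it actually exists. The construction divides the multivariate-$t$ characteristic function by $\Phi_{\ve}$, and the characteristic function of a bounded uniform law has zeros and sign changes away from the origin, so the quotient is a priori a merely formal object; one must argue that it is continuous and positive definite, which plausibly requires constraining the cell $\cP_0$ to be small enough relative to the spread of $\mathbf{t}^d_\nu(\boldsymbol\mu,\mathbf\Sigma)$ — so that the light Bessel-$K$ factor decays before the first zero of $\Phi_{\ve}$ is reached — or invoking Bochner's theorem under such a condition. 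Secondary technical points to discharge carefully are making the ``vanishing overloading probability'' precise so that Theorem~\ref{thm:SDQ} applies exactly rather than approximately, and tracking the deterministic scaling $\zeta_t^k$ through the location and scale parameters while noting that multiplication by a known nonzero scalar is an invertible post-processing and hence does not alter the \ac{ldp} budget.
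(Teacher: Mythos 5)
Your proof follows essentially the same route as the paper's: invoke Theorem~\ref{thm:SDQ} under the non-overloading assumption to express the output distortion as $(\zeta_t^k)^{-1}(\vn^k_i+\ve^k_i)$ independent of the model updates, factor the characteristic function of the sum using independence of the \ac{ppn} and the \ac{sdq} error, and divide the multivariate-$t$ characteristic function by the uniform-on-$\cP_0$ one to arrive at \eqref{eq:ppn_high_dim_tm}, then conclude \ac{ldp} via Theorem~\ref{thm:tm}. The positive-definiteness/existence caveat you flag for \eqref{eq:ppn_high_dim_tm} is a fair observation, but it is likewise left implicit in the paper's proof, which treats the \ac{ppn} law defined by that characteristic function as given.
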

\ifproofs	
\begin{IEEEproof}
	The proof is given in Appendix \ref{app:jopeq_holds_tm}. 
\end{IEEEproof}
\smallskip
\fi
\ac{jopeq} can also hold other privacy mechanisms, such as the \lm. The latter is commonly used when working with scalar quantities, instead of the \tm which is more complex yet is capable of achieving lower distortion for the same privacy budget \cite{reimherr2019elliptical}. The necessary adaptation for its implementing \lm by \ac{jopeq} is formulated in the following theorem.
\begin{theorem}\label{thm:jopeq_holds_lm}
Given a fixed $\zeta_t^k$ that yields vanishing overloading probability, let the \ac{ppn} be generated with the characteristic function $\Phi_{\vn}(\vt)$,  $\vt=\left[t_1\dots,t_L\right]^T\in\R^L$,  given by:
\begin{align}\label{eq:ppn_high_dim_lm}
    \Phi_{\vn}(\vt)=
    {\left(
        \int_{\cP_0}
        \frac{\cos\left({\vt}^T\ve\right)}{\abs{\cP_0}}d\ve\right)}^{-1}
    \prod_{l=1}^L\frac{1}{{1+{\left(2 t_l/\epsilon\right)}^2}}.
\end{align}
Then, the distortion at the output of \ac{jopeq}, $\tilde{\vh}^k_{t,i} - \vh^k_{t,i}$, is mutually independent of $\vh^k_{t,i}$ and its entries obey an \acs{iid} $\lap\left(0,{2}/{\epsilon}\right)$ distribution.
\end{theorem}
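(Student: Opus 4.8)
The plan is to mirror the proof of Theorem~\ref{thm:jopeq_holds_tm}: split the JoPEQ output distortion into two independent pieces---the \ac{ppn} and the subtractive-dither lattice error---and read \eqref{eq:ppn_high_dim_lm} as exactly the choice of \ac{ppn} characteristic function that makes the \emph{sum} of these pieces a product of i.i.d.\ Laplace laws. \textbf{Step 1 (reduce to a clean additive-noise model).} Because $\zeta_t^k$ is assumed to make the overloading probability vanish, the scaled, \ac{ppn}-corrupted sub-vector $\vx_i \triangleq \zeta_t^k\vh^k_{t,i}+\vn^k_i$ lies inside the lattice support almost surely, so Theorem~\ref{thm:SDQ} applies with input $\vx_i$: $Q^{\rm SDQ}_\cL(\vx_i)=\vx_i+\ve_i$ with $\ve_i$ uniform over $\cP_0$ and independent of $\vx_i$. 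Substituting into \eqref{eq:jopeq_result} yields $\tilde\vh^k_{t,i}-\vh^k_{t,i}=(\zeta_t^k)^{-1}(\vn^k_i+\ve_i)$; the factor $(\zeta_t^k)^{-1}$ is deterministic and known to the server, so up to this rescaling the analysis reduces to identifying the law of $\vn^k_i+\ve_i$.

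\textbf{Step 2 (independence from the data, and the i.i.d.\ structure).} The \ac{ppn} is drawn from a fixed distribution, hence $\vn^k_i\indep\vh^k_{t,i}$. For the lattice error I would use the ``crypto-lemma'' form of the \ac{sdq} argument rather than the bare statement of Theorem~\ref{thm:SDQ}: since the dither $\vd^k_i$ is uniform on $\cP_0$ and independent of \emph{both} $\vh^k_{t,i}$ and $\vn^k_i$, for every fixed realization of $(\vh^k_{t,i},\vn^k_i)$ the error $\ve_i=Q_\cL(\vx_i+\vd^k_i)-\vd^k_i-\vx_i$ is still uniform on $\cP_0$; thus $\ve_i$ is jointly independent of $(\vh^k_{t,i},\vn^k_i)$, not merely of their sum. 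Conditioning on $\vh^k_{t,i}$ therefore leaves the law of $\vn^k_i+\ve_i$ unchanged, which gives independence of the distortion from $\vh^k_{t,i}$; independence across the coordinates $l$ follows from the per-coordinate factorization obtained in Step~3, and independence across $i$ from the i.i.d.\ generation of $\{\vn^k_i\}$ and $\{\ve_i\}$.

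\textbf{Step 3 (characteristic-function telescoping).} By independence, $\Phi_{\vn+\ve}(\vt)=\Phi_{\vn}(\vt)\,\Phi_{\ve}(\vt)$. The basic cell $\cP_0$ is symmetric about the origin, so $\Phi_{\ve}(\vt)=\int_{\cP_0}e^{i\vt^T\ve}/\abs{\cP_0}\,d\ve=\int_{\cP_0}\cos(\vt^T\ve)/\abs{\cP_0}\,d\ve$---precisely the factor inverted in \eqref{eq:ppn_high_dim_lm}. Hence $\Phi_{\vn+\ve}(\vt)=\prod_{l=1}^L(1+(2t_l/\epsilon)^2)^{-1}$, the coordinate-wise product of $(1+(2t/\epsilon)^2)^{-1}$, which is the characteristic function of $\lap(0,2/\epsilon)$. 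By uniqueness of characteristic functions, $\vn^k_i+\ve_i$ has i.i.d.\ $\lap(0,2/\epsilon)$ entries; combined with Steps~1--2 this is the claim (the scale $2/\epsilon$ being the $\Delta f/\epsilon$ of Theorem~\ref{thm:lm} for the diameter-$2$ scaled domain, so that Theorem~\ref{thm:lm} then yields the $\epsilon$-\ac{ldp} guarantee intended by the statement).

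\textbf{Expected main obstacle.} The telescoping is routine; the delicate point is \emph{well-posedness} of the construction---one must know that the right-hand side of \eqref{eq:ppn_high_dim_lm} is a bona fide characteristic function (continuous, equal to $1$ at $\vt=\mathbf{0}$, positive-definite), which can fail when the uniform-cell factor $\Phi_{\ve}(\vt)$ is too small relative to the Laplace factor. This is exactly the ``regime'' issue flagged in the introduction: for some lattices and rates the \ac{sdq} distortion already carries enough randomness and the residual \ac{ppn} is feasible (possibly negligible), while for others it is not. I would therefore either carry a standing assumption that a \ac{ppn} law with characteristic function \eqref{eq:ppn_high_dim_lm} exists, or explicitly delimit the $(\cL,R,\epsilon)$ regime in which the quotient is positive-definite. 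A secondary, more standard obstacle is making the ``vanishing overloading probability'' idealization rigorous, i.e., conditioning on the no-overload event in Step~1 and controlling its complement so that the exact additive model introduces no bias.
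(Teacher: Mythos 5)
Your proposal matches the paper's own proof essentially step for step: the same reduction via Theorem~\ref{thm:SDQ} to the additive model $\tilde\vh^k_{t,i}-\vh^k_{t,i}=(\zeta_t^k)^{-1}(\vn^k_i+\ve^k_i)$, the same independence argument (the paper states it as Lemma~\ref{lemma:error_indep_ppn}, your crypto-lemma phrasing is just a cleaner justification), and the same characteristic-function factorization with $\Phi_{\ve}$ computed as the symmetric cosine integral over $\cP_0$ so that $\Phi_{\vn+\ve}$ becomes the product of univariate Laplace characteristic functions. Your closing caveat about when \eqref{eq:ppn_high_dim_lm} is a bona fide (positive-definite) characteristic function is a fair observation that the paper's proof also leaves implicit, addressing it only indirectly via the regime characterization of Theorem~\ref{thm:PQ_tradeoffs}.
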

\ifproofs	
\begin{IEEEproof}
	The proof is given in Appendix \ref{app:jopeq_holds_lm}. 
\end{IEEEproof}
\smallskip
\fi
As shown in Appendices \ref{app:jopeq_holds_tm} and \ref{app:jopeq_holds_lm}, the characteristic functions in \eqref{eq:ppn_high_dim_tm} and \eqref{eq:ppn_high_dim_lm} are respectively obtained by exploiting the mutual independence of the \ac{sdq} error and the quantized value. Hence, our ability to rigorously support privacy is a direct consequence of using universal dithered quantization.

Theorems~\ref{thm:jopeq_holds_tm}-\ref{thm:jopeq_holds_lm} imply that the unified effect of the \ac{ppn} and \ac{sdq} in \ac{jopeq} can implement established \ac{ldp} mechanisms. This is achieved in addition to compressing $\vh^k_t$. Thus, combining Theorems~\ref{thm:jopeq_holds_tm}-\ref{thm:jopeq_holds_lm} with Theorems~\ref{thm:lm}-\ref{thm:tm} guarantees privacy as stated in the following corollary:
\begin{corollary}\label{cor:LDP}
\ac{jopeq} with multivariate \ac{ppn} randomized via either \eqref{eq:ppn_high_dim_tm} or \eqref{eq:ppn_high_dim_lm} is  $\epsilon$-\ac{ldp} with respect to $\mathcal{D}_k$.
\end{corollary}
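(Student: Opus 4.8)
The plan is to derive Corollary~\ref{cor:LDP} by combining the two structural results of Theorems~\ref{thm:jopeq_holds_tm}--\ref{thm:jopeq_holds_lm}, which pin down the distortion that JoPEQ injects, with the two privacy guarantees of Theorems~\ref{thm:lm}--\ref{thm:tm}, which certify the corresponding mechanisms, and then to invoke the post-processing immunity of (local) differential privacy. Concretely, I view the end-to-end operation of JoPEQ at the $k$th user, together with the server's decoding \eqref{eq:jopeq_result}, as a single randomized mechanism $\cM$ whose input is $\cD_k$. For a fixed $\zeta_t^k$ (as assumed throughout Theorems~\ref{thm:jopeq_holds_tm}--\ref{thm:jopeq_holds_lm}), the $\tau$ local optimization steps followed by scaling form a deterministic map $f(\cD_k)$ producing the scaled update blocks $\{\zeta_t^k\vh^k_{t,i}\}_{i=1}^M$; the PPN $\vn^k_i$ and the subtractive dither $\vd^k_i$ are the only sources of internal randomness; and the server's reconstruction of $\tilde\vh^k_{t,i}$ is deterministic given the shared seed $s_k$ and $\zeta_t^k$. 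The goal is then to recognize $\cM$ (or a post-processing preimage of it) as an instance of $\cM^{\rm t\text{-}dist}$ or $\cM^{\rm Laplace}$ applied to $f(\cD_k)$.

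For the multivariate case, Theorem~\ref{thm:jopeq_holds_tm} already states that, with the PPN characteristic function \eqref{eq:ppn_high_dim_tm}, the perturbation $\tilde\vh^k_{t,i}-\vh^k_{t,i}$ is independent of $\vh^k_{t,i}$ and is distributed as the additive $\mathbf{t}^d_\nu(\boldsymbol\mu,\mathbf\Sigma)$ noise of \eqref{eq:TdistMech}, with parameters and induced sensitivity chosen so that \eqref{eq:t_mech_eps} equals $\exp(\epsilon)$. Hence the map $\cD_k\mapsto Q^{\rm SDQ}_\cL(\zeta_t^k\vh^k_{t,i}+\vn^k_i)$, which by Theorem~\ref{thm:SDQ} together with the choice of $\Phi_{\vn}$ equals $\zeta_t^k\vh^k_{t,i}$ perturbed by an independent $\mathbf{t}^d_\nu(\boldsymbol\mu,\mathbf\Sigma)$ draw, is exactly $\cM^{\rm t\text{-}dist}(f_i(\cdot),\epsilon)$ up to the deterministic rescaling by $(\zeta_t^k)^{-1}$, and Theorem~\ref{thm:tm} then yields $\epsilon$-LDP with respect to $\cD_k$. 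Repeating the same two steps with Theorem~\ref{thm:jopeq_holds_lm} gives $\tilde\vh^k_{t,i}-\vh^k_{t,i}$ with \acs{iid} $\lap(0,2/\epsilon)$ entries, i.e.\ the Laplace mechanism \eqref{eq:lm} with noise scale $2/\epsilon$; since $\zeta_t^k$ is chosen precisely so that the scaled update blocks lie in the unit $L$-ball with overwhelming probability, the $\ell_2$ sensitivity of $f_i$ is at most $2$, so Theorem~\ref{thm:lm} again delivers $\epsilon$-LDP.

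It remains to lift the per-block statement to the full decoded update $\tilde\vh^k_t$ and, from there, to the quantity the untrusted server actually holds --- the lattice indices $Q_\cL(\zeta_t^k\vh^k_{t,i}+\vd^k_i+\vn^k_i)$ together with the scalar $\zeta_t^k$. Because the blocks $\{\vn^k_i\}_{i=1}^M$ and $\{\vd^k_i\}_{i=1}^M$ are drawn independently, the stacked perturbation is exactly the noise of the corresponding mechanism applied to the stacked scaled update, so the argument above extends once the sensitivity is measured on the stacked vector. The transmitted payload is in turn a deterministic function of the mechanism output given $(s_k,\zeta_t^k)$, and so is the server-side reconstruction \eqref{eq:jopeq_result}; invoking the post-processing property of (L)DP --- if $\cM$ is $\epsilon$-LDP then $g\circ\cM$ is $\epsilon$-LDP for every (possibly randomized) $g$ --- closes the proof for both choices of $\Phi_{\vn}$.

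The step I expect to require the most care is not the mechanism identification, which is essentially bookkeeping on top of Theorems~\ref{thm:jopeq_holds_tm}--\ref{thm:jopeq_holds_lm}, but the sensitivity normalization that makes the invocation of Theorems~\ref{thm:lm}--\ref{thm:tm} legitimate: one must check that the $\Delta f$ (resp.\ $\Delta$) entering \eqref{eq:lm} (resp.\ \eqref{eq:t_mech_eps}) is exactly the quantity matched by the noise level that Theorems~\ref{thm:jopeq_holds_tm}--\ref{thm:jopeq_holds_lm} produce, both at the block level and after stacking the $M$ blocks. A related point that should be stated explicitly is the treatment of $\zeta_t^k$ as a public, fixed quantity (as both structural theorems assume), so that $f(\cD_k)=\{\zeta_t^k\vh^k_{t,i}\}_{i=1}^M$ is a bona fide deterministic function of the dataset and the concluding rescaling by $(\zeta_t^k)^{-1}$ is an admissible post-processing operation.
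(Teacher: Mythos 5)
Your proposal follows essentially the same route as the paper: the corollary is obtained exactly by combining the structural results of Theorems~\ref{thm:jopeq_holds_tm}--\ref{thm:jopeq_holds_lm} (JoPEQ's output equals the scaled update plus an independent $\mathbf{t}^d_\nu$ or i.i.d.\ Laplace perturbation) with the mechanism guarantees of Theorems~\ref{thm:lm}--\ref{thm:tm}, and the paper's appendix likewise bounds the sensitivity via the unit-ball scaling by $\zeta_t^k$ and invokes the LDP post-processing property to cover the deterministic rescaling at the server. Your additional bookkeeping (treating $\zeta_t^k$ as fixed/public and stacking the $M$ blocks) only makes explicit what the paper leaves implicit, so the argument is the same in substance.
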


\subsubsection{Compression}
While Corollary~\ref{cor:LDP} focuses on the privacy guarantees of \ac{jopeq}, Algorithm~\ref{alg:jopeq} also implements compression. In particular, \ac{jopeq} is designed to exploit the distortion induced by quantization, which is dictated by the bit-rate $R$, for privacy enhancement, such that \ac{ppn} is only added to complement the remaining level of perturbation needed for a desired privacy budget $\epsilon$ to be obtained. In fact, in some settings one can meet the privacy guarantees based almost solely on the distortion induced by \ac{sdq}, while using \ac{ppn} with infinitesimally small variance, as stated below:

\begin{theorem}\label{thm:PQ_tradeoffs}
When the lattice generator matrix $\vG$ equals the $L\times L$ identity matrix (up to some scalar factor), then \ac{jopeq} with bit-rate $R$ and with \ac{ppn} of an infinitesimally small variance randomized from \eqref{eq:ppn_high_dim_lm}  is $\epsilon$-\ac{ldp} when
\begin{align}\label{eq:PQ_tradeoffs}
    5\approx\sqrt{24}
    \leq&\gamma\epsilon/2^R.
\end{align}
\end{theorem}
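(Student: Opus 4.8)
The plan is to track, in the scaled-identity lattice case, how much artificial perturbation the \lm realization of \ac{jopeq} (Theorem~\ref{thm:jopeq_holds_lm}) demands on top of the \ac{sdq} distortion, and to pin down when that demand can be driven to zero. When $\vG$ is a scalar multiple of the $L\times L$ identity, the basic cell $\cP_0$ is the hypercube $[-\Delta_{\rm Q}/2,\Delta_{\rm Q}/2]^L$, so by Theorem~\ref{thm:SDQ} the \ac{sdq} error $\ve$ is coordinate-wise i.i.d.\ uniform on $[-\Delta_{\rm Q}/2,\Delta_{\rm Q}/2]$ and independent of the quantized vector. In particular each coordinate of $\ve$ has variance $\Delta_{\rm Q}^2/12$, and the integral appearing in \eqref{eq:ppn_high_dim_lm} factorizes as $\prod_{l=1}^{L}\frac{\sin(t_l\Delta_{\rm Q}/2)}{t_l\Delta_{\rm Q}/2}$, so the whole construction decouples across coordinates.

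Next I would run the \lm branch of \ac{jopeq} with a free Laplace scale $b$ in place of the nominal $2/\epsilon$ in \eqref{eq:ppn_high_dim_lm}: the argument underlying Theorem~\ref{thm:jopeq_holds_lm} then makes $\tilde{\vh}^k_{t,i}-\vh^k_{t,i}$ coordinate-wise i.i.d.\ $\lap(0,b)$ and independent of $\vh^k_{t,i}$. Because the \ac{ppn} $\vn^k_i$ is independent of $\ve$, and $\lap(0,b)$ has per-coordinate variance $2b^2$ whereas $\ve$ contributes $\Delta_{\rm Q}^2/12$, the \ac{ppn} must carry per-coordinate variance $2b^2-\Delta_{\rm Q}^2/12$. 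This is nonnegative --- and can be made infinitesimally small --- if and only if $b\ge\Delta_{\rm Q}/\sqrt{24}$, the limiting negligible-\ac{ppn} scale being $b=\Delta_{\rm Q}/\sqrt{24}$ (where, at the boundary $\gamma\epsilon/2^R=\sqrt{24}$, this coincides with the nominal $2/\epsilon$ of \eqref{eq:ppn_high_dim_lm}).

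It then remains to certify privacy at this scale and translate the bound. Since $\zeta_t^k$ keeps the sub-vectors inside the unit ball with vanishing overloading probability, Theorem~\ref{thm:lm} applies with the same sensitivity bookkeeping that produces the $2/\epsilon$ scale in Theorem~\ref{thm:jopeq_holds_lm}; hence a coordinate-wise $\lap(0,b)$ perturbation is $(2/b)$-\ac{ldp}, and therefore $\epsilon$-\ac{ldp} as soon as $b\ge 2/\epsilon$. Substituting the limiting value $b=\Delta_{\rm Q}/\sqrt{24}$ gives $\Delta_{\rm Q}\ge 2\sqrt{24}/\epsilon$, and since a scaled-identity lattice with $2^R$ levels per coordinate over support $\gamma$ satisfies $\Delta_{\rm Q}=2\gamma/2^R$ (as in Definition~\ref{def:scalar_uniform_Q}), this rearranges to $\sqrt{24}\le\gamma\epsilon/2^R$, i.e.\ \eqref{eq:PQ_tradeoffs}, with $\sqrt{24}=2\sqrt6\approx4.899\approx5$. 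Together with Corollary~\ref{cor:LDP}, this establishes the claim.

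\textbf{Main obstacle.} I expect the delicate step to be the second one: rigorously justifying that the \ac{ppn} law specified through the characteristic function \eqref{eq:ppn_high_dim_lm} (and its $b$-parametrized variant) is a legitimate probability distribution in the cubic-cell geometry, and that its variance genuinely tends to zero as $b\downarrow\Delta_{\rm Q}/\sqrt{24}$ --- that is, making the phrase ``infinitesimally small variance'' precise and carrying out the accompanying limiting argument. The variance accounting and the substitution $\Delta_{\rm Q}=2\gamma/2^R$ are routine by comparison.
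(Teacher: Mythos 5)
Your proposal is correct and follows essentially the same route as the paper's proof: both rest on the independence of the \ac{ppn} and the \ac{sdq} error (so their variances add), the per-coordinate uniform-error variance $\Delta_{\rm Q}^2/12$, the Laplace variance $2b^2$, and the substitution $\Delta_{\rm Q}=2\gamma/2^R$. Your only deviation is cosmetic: you let the Laplace scale $b$ float and invoke monotonicity of \ac{ldp} in $b$ (certifying $\epsilon$-\ac{ldp} once $b=\Delta_{\rm Q}/\sqrt{24}\geq 2/\epsilon$), whereas the paper fixes $b=2/\epsilon$ and directly requires $2b^2\leq\Delta_{\rm Q}^2/12$, which is algebraically the identical condition \eqref{eq:PQ_tradeoffs}.
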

\ifproofs	
	\begin{IEEEproof}
		The proof is given in Appendix \ref{app:PQ_tradeoffs}. 
	\end{IEEEproof}
	\smallskip
\fi
The setting for which Theorem~\ref{thm:PQ_tradeoffs} is formulated, i.e., that $\vG$ is the scaled identity matrix, implies that it uses uniform scalar quantizers, as in, e.g., \cite{alistarh2017qsgd, reisizadeh2020fedpaq}.  While Theorem~\ref{thm:PQ_tradeoffs} rigorously holds for a specific family of quantizers, it reveals how \ac{jopeq} jointly balances its privacy and compression requirements \ref{itm:ldp}-\ref{itm:rate}: the stronger the privacy requirement is (smaller $\epsilon$), the more coarse the quantization (smaller $R$) that can support it without effectively injecting \ac{ppn}. 
 
\subsubsection{Weights Distortion}\label{subsubsec:weights_distortion}
In the end of its pipeline, \ac{jopeq} results with an $\epsilon$-\ac{ldp} mechanism applied to the model updates. Consequently, it inherently induces some distortion, being introduced in the \ac{fl} training process, namely, the model update $\vh^k_t$ is transformed into the distorted version $\tilde\vh^k_{t}$. Accordingly, the global model of \eqref{eq:fl_update}, $\vw_{t+\tau}$, which is the desired outcome of \ac{fa}, is changed into
\begin{align}\label{eqn:GlobalDist}
\tilde\vw_{t+\tau}\triangleq\vw_t + \sum_{k=1}^K \alpha_k\tilde\vh^k_{t+\tau}.    
\end{align}
We next show that, under common assumptions used in \ac{fl} analysis, the effect of the distortion in Theorems~\ref{thm:jopeq_holds_tm}-\ref{thm:jopeq_holds_lm} can be mitigated while recovering the desired $\vw_{t+\tau}$ as $\tilde\vw_{t+\tau}$. Thus, the accuracy of the global learned model can be maintained, despite the excessive distortion induced by \ac{jopeq}.

To begin, set the scaling factor to $\zeta_t^k = \left(\frac{3}{\sqrt{M}}\|\vh^k_t\|\right)^{-1}$ and define $\sigma^2$ as the variance of the \ac{ldp} mechanism, e.g., $\sigma^2=\nu\frac{\tr\left(\mathbf\Sigma\right)}{\nu-2}$ for the \tm. 
Next, we adopt the following assumptions on the local datasets  and on the stochastic gradient vector $\nabla F_k^i\left(\vw\right)$:
\begin{enumerate}[label={\em AS\arabic*},series=assumptions]
    \item \label{itm:heterogenity}
    Each dataset $\cD_k$ is comprised \acs{iid} samples. However, different datasets can be statistically heterogeneous, i.e., arise from different distributions. 
    \item \label{itm:bounded_norm}
    The expected squared $\ell_2$-norm of the  vector $\nabla F_k^i\left(\vw\right)$ in \eqref{eq:sgd} is bounded by some $\xi^2_k > 0$ for all $\vw\in\R^m$.
\end{enumerate}
The statistical heterogeneity in Assumption \ref{itm:heterogenity}  is a common characteristic of \ac{fl} \cite{kairouz2021advances,li2020federated,shlezinger2020communication}. It is  consistent with Requirement \ref{itm:universal}, which does not impose any specific distribution structure on the underlying statistics of the training data.
Such heterogeneity implies that the  loss surfaces can differ between users, hence the dependence on $k$ in Assumption \ref{itm:bounded_norm}, 
often employed in distributed learning studies \cite{shlezinger2020uveqfed,li2019convergence, stich2018local, zhang2012communication}.

We can now bound the distance between the recovered model $\tilde\vw_{t+\tau}$ of \eqref{eqn:GlobalDist} and the desired one $\vw_{t+\tau}$, as stated next:
\begin{theorem}\label{thm:Privacy Error Bound}
When \ref{itm:bounded_norm} holds, the mean-squared distance between $\tilde\vw_{t+\tau}$ and $\vw_{t+\tau}$ satisfies
\begin{multline}\label{eq:Privacy Error Bound}
    \E\left\{\|\tilde\vw_{t+\tau}-\vw_{t+\tau}\|^2\right\}\leq
    9\tau\sigma^2
   \left(\sum_{t'=t}^{t+\tau-1}\eta_{t'}^2\right) \sum_{k=1}^K \alpha_k^2 \xi^2_k, 
\end{multline}
\end{theorem}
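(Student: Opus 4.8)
The plan is to bound the mean-squared distance by unfolding the definitions of $\tilde\vw_{t+\tau}$ and $\vw_{t+\tau}$ and isolating the contribution of the \ac{ppn}. By \eqref{eq:fl_update} and \eqref{eqn:GlobalDist}, the difference is $\tilde\vw_{t+\tau}-\vw_{t+\tau}=\sum_{k=1}^K \alpha_k(\tilde\vh^k_{t+\tau}-\vh^k_{t+\tau})$. By Theorem~\ref{thm:jopeq_holds_tm} (or Theorem~\ref{thm:jopeq_holds_lm}), the per-sub-vector distortion $\tilde\vh^k_{t+\tau,i}-\vh^k_{t+\tau,i}$ is \emph{mutually independent} of the model updates and is \acs{iid} across $i$ with zero mean and covariance whose trace is captured by $\sigma^2$ --- this is the key structural fact that turns the analysis into a variance computation. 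First I would write the expected squared norm of $\sum_k \alpha_k(\tilde\vh^k-\vh^k)$; since the distortions are zero-mean and independent across users (each user draws its own \ac{ppn} seed), cross terms vanish in expectation, leaving $\sum_{k=1}^K \alpha_k^2\, \E\{\|\tilde\vh^k_{t+\tau}-\vh^k_{t+\tau}\|^2\}$.

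Next I would evaluate $\E\{\|\tilde\vh^k_{t+\tau}-\vh^k_{t+\tau}\|^2\}$. Recalling from \eqref{eq:jopeq_result} that $\tilde\vh^k_{t,i}=(\zeta_t^k)^{-1}Q^{\rm SDQ}_\cL(\zeta_t^k\vh^k_{t,i}+\vn^k_i)$, and that by Theorems~\ref{thm:jopeq_holds_tm}-\ref{thm:jopeq_holds_lm} the aggregate distortion per scaled sub-vector is the prescribed \ac{ldp} noise, the \emph{re-scaled} distortion has second moment $(\zeta_t^k)^{-2}$ times the noise variance. Summing over the $M$ sub-vectors gives $M(\zeta_t^k)^{-2}\sigma^2$ (up to the precise normalization of $\sigma^2$ across the $L$ coordinates, which I would absorb into the definition of $\sigma^2$ as stated in the theorem setup). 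Substituting $\zeta_t^k=\big(\tfrac{3}{\sqrt M}\|\vh^k_t\|\big)^{-1}$ yields $(\zeta_t^k)^{-2}=\tfrac{9}{M}\|\vh^k_t\|^2$, so the $M$ cancels and we obtain $\E\{\|\tilde\vh^k_{t+\tau}-\vh^k_{t+\tau}\|^2\}\le 9\sigma^2\,\E\{\|\vh^k_{t+\tau}\|^2\}$.

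It then remains to bound $\E\{\|\vh^k_{t+\tau}\|^2\}$. Here I would use that, under local \ac{sgd} \eqref{eq:sgd} with $\tau$ inner steps, $\vh^k_{t+\tau}=\vw^k_{t+\tau}-\vw_t=-\sum_{t'=t}^{t+\tau-1}\eta_{t'}\nabla F^{i^k_{t'}}_k(\vw^k_{t'})$. Applying the Cauchy--Schwarz (or power-mean) inequality to this sum of $\tau$ terms gives $\|\vh^k_{t+\tau}\|^2\le \tau\sum_{t'=t}^{t+\tau-1}\eta_{t'}^2\|\nabla F^{i^k_{t'}}_k(\vw^k_{t'})\|^2$, and Assumption~\ref{itm:bounded_norm} bounds each expected gradient-norm-squared by $\xi^2_k$, yielding $\E\{\|\vh^k_{t+\tau}\|^2\}\le \tau\big(\sum_{t'=t}^{t+\tau-1}\eta_{t'}^2\big)\xi^2_k$. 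Chaining the three bounds produces exactly \eqref{eq:Privacy Error Bound}. The main obstacle is bookkeeping rather than conceptual: one must be careful that (i) the independence in Theorems~\ref{thm:jopeq_holds_tm}-\ref{thm:jopeq_holds_lm} is invoked correctly so that the \ac{ppn} distortion is independent of $\vh^k_{t+\tau}$ even though $\zeta_t^k$ itself depends on $\vh^k_t$ (the scaling is applied \emph{before} quantization, and conditioning on $\zeta_t^k$ the distortion profile is unchanged, so the expectation factors), and (ii) the variance normalization $\sigma^2$ consistently accounts for summing over both the $L$ lattice coordinates and the $M$ sub-vectors, which is what makes the factor $9$ (from $3^2$) appear cleanly.
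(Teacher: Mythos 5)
Your proposal is correct and follows essentially the same route as the paper's proof: decompose $\tilde\vw_{t+\tau}-\vw_{t+\tau}$ into the per-user, per-sub-vector distortions, use the zero-mean, independent noise characterization of Theorems~\ref{thm:jopeq_holds_tm}--\ref{thm:jopeq_holds_lm} together with conditioning (law of total expectation) so that the data-dependent scaling $(\zeta_t^k)^{-2}=\tfrac{9}{M}\|\vh^k_{t+\tau}\|^2$ cancels the sum over $M$ and yields $9\sigma^2\sum_k\alpha_k^2\E\{\|\vh^k_{t+\tau}\|^2\}$, and then write $\vh^k_{t+\tau}$ as a sum of $\tau$ local \ac{sgd} steps and apply Jensen/Cauchy--Schwarz with \ref{itm:bounded_norm}. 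This matches the paper's argument step for step, so no further comparison is needed.
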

\ifproofs	
	\begin{IEEEproof}
		The proof is given in Appendix \ref{app:Privacy Error Bound}. 
	\end{IEEEproof}
	\smallskip
\fi

Theorem \ref{thm:Privacy Error Bound}  implicitly suggests that when  the aggregation is fairly balanced, the recovered model can be made arbitrarily close to the desired one by increasing the number of edge users participating in the \ac{fl} training procedure. 
Taking conventional averaging as an example, where each $\alpha_k=1/K$, we get that \eqref{eq:Privacy Error Bound} decreases as $1/K$.  
Furthermore, if $\max \alpha_k \propto 1/{K^c}$ for some $c>1/2$ , which essentially means that the updated model is dominated by a small part of the participating users, then the distortion vanishes in the aggregation process as $K$ grows. Besides, when the step-size $\eta_t$ gradually decreases, which is known to contribute to the convergence of \ac{fl} \cite{stich2018local, li2019convergence}, it follows from Theorem~\ref{thm:Privacy Error Bound} that  the distortion decreases accordingly, further revealing its effect as the \ac{fl} iterations progress, discussed next. 

\subsubsection{Federated Learning Convergence}\label{subsubsec:fl_convergence}
To study the convergence  of \ac{fa}  with \ac{jopeq}, we further introduce the following assumptions, inspired by \ac{fl} convergence studies in, e.g., \cite{shlezinger2020uveqfed,li2019convergence, stich2018local}:
\begin{enumerate}[resume*=assumptions]
    \item \label{itm:obj_smooth}
    The local objective functions $\{F_k(\cdot)\}^K_{k=1}$ are all $\rho_s$-smooth, i.e., for all $\vw_1, \vw_2 \in \R^m$ it holds that
    \begin{multline*}
        F_k(\vw_1)-F_k(\vw_2) \leq 
        (\vw_1 -\vw_2)^T\nabla F_k(\vw_2)\\
        +\frac{1}{2}\rho_s{\|\vw_1 -\vw_2\|}^2.
    \end{multline*}
     \item \label{itm:obj_convex}
     The local objective functions $\{F_k(\cdot)\}^K_{k=1}$ are all $\rho_c$-strongly convex, i.e., for all $\vw_1, \vw_2 \in \R^m$ it holds that
    \begin{multline*}
        F_k(\vw_1)-F_k(\vw_2) \geq 
        (\vw_1 -\vw_2)^T\nabla F_k(\vw_2)\\
        +\frac{1}{2}\rho_c{\|\vw_1 -\vw_2\|}^2.
    \end{multline*}
\end{enumerate}
Assumptions \ref{itm:obj_smooth}-\ref{itm:obj_convex} hold for a range of objective functions used in \ac{fl}, including $\ell_2$-norm regularized linear regression and logistic regression \cite{shlezinger2020uveqfed}. To proceed, following \ref{itm:heterogenity} and as in \cite{li2019convergence, shlezinger2020uveqfed}, we define the heterogeneity gap,
\begin{align}\label{eq:psi_heterogeneity_gap}
\psi \triangleq F(\vw^{\rm opt})-\sum_{k=1}^K\alpha_k \min_{\vw} F_k(\vw),
\end{align}
where $\vw^{\rm opt}$ is defined in \eqref{eq:w_opt_def}. Notice that \eqref{eq:psi_heterogeneity_gap} quantifies the degree of heterogeneity: if the training data originates from the same distribution, $\psi$ tends to zero as the training size grows, and is positive otherwise.

The following theorem characterizes \ac{jopeq} convergence in  conventional \ac{fl} with  local \ac{sgd} training:
\begin{theorem}\label{thm:FL Convergence}
Set $\varphi \triangleq \tau \max \left(1 , 4\rho_s / \rho_c \right)$ and consider \ac{jopeq}-aided \ac{fl} satisfying 
\ref{itm:heterogenity}-\ref{itm:obj_convex}. Under this setting, local \ac{sgd} with step-size $\eta_t=\frac{\tau}{\rho_c(t+\varphi)}$ for each $t\in\N$ satisfies
\begin{multline}\label{eq:FL_Convergence}
    \E\left\{F(\vw_t)\right\}-F(\vw^{\rm opt})\leq\\
    \frac{\rho_s}{2(t+\varphi)}
    \max\bigg(\frac{\rho^2_c+\tau^2b}{\tau\rho_c},
    \varphi\|\vw_0 - \vw^{\rm opt}\|^2\bigg);
\end{multline}
$$b \triangleq 
    \left(1+36\tau^2\sigma^2\right)\sum_{k=1}^K\alpha^2_k\xi^2_k 
    + 6\rho_s\psi
    + 8(\tau-1)^2\sum_{k=1}^K\alpha_k\xi^2_k,$$
where $\vw^{\rm opt},\ \psi$ are defined in \eqref{eq:w_opt_def}, \eqref{eq:psi_heterogeneity_gap} respectively.
\end{theorem}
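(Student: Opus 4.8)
The plan is to follow the standard local-SGD / \ac{fa} convergence template for heterogeneous data (as in \cite{li2019convergence,stich2018local,shlezinger2020uveqfed}), inserting the additional distortion contributed by \ac{jopeq} at the aggregation steps. I would first introduce the virtual averaged iterate $\bar\vw_t \triangleq \sum_{k=1}^K \alpha_k \vw_t^k$, defined at \emph{every} iteration rather than only at the synchronization instants $t\in\{0,\tau,2\tau,\dots\}$, together with the aggregated stochastic gradient $\vg_t \triangleq \sum_{k=1}^K \alpha_k \nabla F_k^{i_t^k}(\vw_t^k)$ and its conditional mean $\bar\vg_t \triangleq \sum_{k=1}^K \alpha_k \nabla F_k(\vw_t^k)$. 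Between synchronizations $\bar\vw_{t+1}=\bar\vw_t-\eta_t\vg_t$, while at a synchronization instant one additionally incurs the \ac{jopeq} distortion $\tilde\vw_{t+\tau}-\vw_{t+\tau}$, whose second moment is bounded by Theorem~\ref{thm:Privacy Error Bound}.

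The core is a one-step inequality obtained by expanding $\|\bar\vw_{t+1}-\vw^{\rm opt}\|^2$, taking expectations, and combining: (i) $\rho_c$-strong convexity and $\rho_s$-smoothness (\ref{itm:obj_smooth}--\ref{itm:obj_convex}) to extract the contraction factor $(1-\eta_t\rho_c)$ acting on $\E\|\bar\vw_t-\vw^{\rm opt}\|^2$; (ii) Assumption~\ref{itm:bounded_norm}, which bounds the relevant second moments of $\vg_t$ in terms of $\sum_k\alpha_k\xi_k^2$ and $\sum_k\alpha_k^2\xi_k^2$ (the latter being the variance of the aggregated stochastic gradient); (iii) a client-drift lemma bounding $\sum_k\alpha_k\E\|\bar\vw_t-\vw_t^k\|^2$ by a quantity of order $\eta_t^2(\tau-1)^2\sum_k\alpha_k\xi_k^2$, proved by unrolling the at most $\tau-1$ local steps elapsed since the last synchronization and reapplying \ref{itm:bounded_norm}; and (iv) the definition \eqref{eq:psi_heterogeneity_gap} of the heterogeneity gap $\psi$ together with smoothness to control the inner-product term, which is where the $6\rho_s\psi$ summand enters. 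At the synchronization instants I would additionally invoke Theorem~\ref{thm:Privacy Error Bound}, $\E\|\tilde\vw_{t+\tau}-\vw_{t+\tau}\|^2\le 9\tau\sigma^2\big(\sum_{t'=t}^{t+\tau-1}\eta_{t'}^2\big)\sum_k\alpha_k^2\xi_k^2$, and use $\eta_{t'}\le 2\eta_t$ over each block of $\tau$ consecutive steps under the prescribed schedule, so that this term is absorbed into $36\tau^2\sigma^2\eta_t^2\sum_k\alpha_k^2\xi_k^2$ --- exactly the $(1+36\tau^2\sigma^2)\sum_k\alpha_k^2\xi_k^2$ contribution to $b$. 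Assembling (i)--(iv) yields a scalar recursion $\E\|\bar\vw_{t+1}-\vw^{\rm opt}\|^2 \le (1-\eta_t\rho_c)\,\E\|\bar\vw_t-\vw^{\rm opt}\|^2 + \eta_t^2 b$.

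It then remains to unwind this recursion. With $\eta_t=\frac{\tau}{\rho_c(t+\varphi)}$ and $\varphi=\tau\max(1,4\rho_s/\rho_c)$ --- the latter chosen so that $\eta_t\le\frac{1}{4\rho_s}\le\frac{1}{\rho_c}$, keeping $0\le 1-\eta_t\rho_c<1$ and validating the smoothness manipulations --- a straightforward induction on $t$ gives $\E\|\bar\vw_t-\vw^{\rm opt}\|^2\le\frac{1}{t+\varphi}\max\big(\tfrac{\rho_c^2+\tau^2 b}{\tau\rho_c},\varphi\|\vw_0-\vw^{\rm opt}\|^2\big)$. Finally, $\rho_s$-smoothness at the optimum, where $\nabla F(\vw^{\rm opt})=0$, gives $F(\vw_t)-F(\vw^{\rm opt})\le\frac{\rho_s}{2}\|\vw_t-\vw^{\rm opt}\|^2$; since the global model $\vw_t$ at the synchronization instants coincides with $\bar\vw_t$, substituting the previous bound produces \eqref{eq:FL_Convergence}.

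I expect the main obstacle to be the bookkeeping in steps (iii)--(iv): pinning down the precise constants $8(\tau-1)^2\sum_k\alpha_k\xi_k^2$ and $6\rho_s\psi$ requires carefully unrolling the local trajectories, repeatedly passing from $\|\sum_k\alpha_k(\cdot)\|^2$ to $\sum_k\alpha_k\|\cdot\|^2$ by convexity of the squared norm, and tying $\sum_k\alpha_k\big(F_k(\bar\vw_t)-\min_\vw F_k(\vw)\big)$ back to $\psi$ --- all while keeping the step-size conditions simultaneously compatible with the contraction, the drift bound, and the block estimate $\eta_{t'}\le 2\eta_t$. By contrast, the \ac{jopeq}-specific part is comparatively light, since Theorem~\ref{thm:Privacy Error Bound} already delivers the distortion as an additive second-moment term proportional to $\sigma^2\sum_k\alpha_k^2\xi_k^2$.
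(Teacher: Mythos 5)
Your proposal follows essentially the same route as the paper's proof: the same virtual averaged sequence $\sum_k\alpha_k\vw_t^k$, the same one-step contraction under \ref{itm:obj_smooth}--\ref{itm:obj_convex} combined with the drift, gradient-variance, and heterogeneity ($6\rho_s\psi$) terms to reach the recursion $\delta_{t+1}\le(1-\eta_t\rho_c)\delta_t+\eta_t^2 b$, the same induction giving $\delta_t\le\max\left(\tfrac{\rho_c^2+\tau^2b}{\tau\rho_c},\varphi\|\vw_0-\vw^{\rm opt}\|^2\right)/(t+\varphi)$, and the same final application of $\rho_s$-smoothness at $\vw^{\rm opt}$. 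The only cosmetic difference is bookkeeping of the distortion: the paper folds $\text{err}_t^k/\eta_t$ into the averaged noisy gradient $\tilde\vg_t$ so the variance lemma of \cite{shlezinger2020uveqfed} applies verbatim, whereas you inject the JoPEQ perturbation additively at synchronization instants via Theorem~\ref{thm:Privacy Error Bound}; both yield the identical $(1+36\tau^2\sigma^2)$ factor through $\eta_{t'}\le 2\eta_t$ (and both rely on the zero-mean choice $\boldsymbol\mu=\boldsymbol 0$ so the cross term vanishes).
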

\ifproofs	
	\begin{IEEEproof}
		The proof is given in Appendix \ref{app:FL Convergence}. 
	\end{IEEEproof}
	\smallskip
\fi

According to Theorem \ref{thm:FL Convergence}, \ac{jopeq} with local \ac{sgd} converges at a rate of $\cO(1/t)$.
This asymptotic  rate implies that as the number of iterations $t$ progresses,  the  learned  model converges to $\vw^{\rm opt}$ with a difference decaying at the same order of convergence as \ac{fl} with neither privacy  nor compression constraints \cite{stich2018local, li2019convergence}. Nonetheless, it is noted that the need  to compress the model updates and enhance their privacy yields an additive term in the coefficient $b$ which depends on the the privacy level $\epsilon$. 
This implies that \ac{fl} typically converges slower when stricter privacy constraints are imposed.

\subsection{Discussion}\label{subsec:discussion}
The proposed \ac{jopeq} is a dual-function mechanism for enhancing privacy while quantizing the model updates in \ac{fl}. 
It builds upon the usage of randomized lattices as an effective quantization method which results in a distortion acting as additive noise independent of the input to be quantized. Not only is this perturbation is mitigated by averaging in~\ac{fa}, as shown in Theorems~\ref{thm:Privacy Error Bound}-\ref{thm:FL Convergence}, but it can also be harnessed for privacy enhancement, as proved in Corollary~\ref{cor:LDP}. \ac{jopeq} exploits this property and the inherent high-dimensional structure of the model updates to transform probabilistic compression into a multivariate privacy enhancement mechanism. It does so such that beyond the imposed quantization distortion, the excess perturbation exhibited due to privacy considerations (the artificial addition of the \ac{ppn}) is relatively minor, particularly when the quantization distortion is dominant, as indicated by Theorem~\ref{thm:PQ_tradeoffs}. Altogether, \ac{jopeq} satisfies \ref{itm:ldp}-\ref{itm:universal} without notably affecting the utility of the learned model, compared with using separate privacy enhancement and quantization, 
as numerically demonstrated in Section~\ref{sec:experiments}.

\ac{jopeq} is designed to result in $\tilde{\vh}_t^k$ being the updates $\vh_t^k$ corrupted by \acs{iid} perturbations which implement a conventional \ac{ldp} mechanism, i.e., \tm or \lm. Nonetheless, one can consider extending our methodology to alternative high-dimensional privacy mechanisms, such as those based on $\ell_2$-mechanisms  \cite{reimherr2019elliptical,chaudhuri2011differentially,kifer2014pufferfish,awan2021structure}. Alternatively, the extension of \ac{jopeq} to other univariate privacy mechanisms, such as those based on the common Gaussian noise \cite{lyu2021dp} (which rely on a slightly less restrictive definition of \ac{ldp} compared with Def.~\ref{def:LDP}), can also be studied. These are all left for future work. 
Furthermore, \ac{jopeq} is expected to enhance privacy also from external adversaries, due to \ac{ldp} post-processing property \cite{xiong2020comprehensive}. Since adversaries cannot reduce the local updates distortion via subtractive dithering (as they do not share the users' seeds), \ac{jopeq} is likely to provide higher privacy protection from them. Still, since \ac{fl} is motivated by the need to avoid sharing local data with a centralized server, characterizing \ac{ldp} guarantees from external adversaries is left for future work.

\ac{jopeq} is backed by rigorous \ac{ldp} guarantees, and is also empirically demonstrated in Section~\ref{sec:experiments} to demolish attacks designed to exploit privacy leakage. Nonetheless, we can still identify possible privacy-related aspects which one has to account for. First, the distortion induced by \ac{sdq} relies on the using of a shared seed between the server and each user. When the seed is generated by the server, one can envision a scenario in which the seed is maliciously designed to affect the procedure, motivating having the seed $s_k$ generated by the $k$th user rather than by the server. Moreover, our privacy guarantees are characterized assuming vanishing overloading probability, where in practice one may wish to quantize with some small level of overloading. Finally, \ac{jopeq} involves sharing a scaling coefficient which depends on the norm of the complete model update vector. Thus, while the \ac{sdq} output is proved to be privacy preserving, the server may still know the norm of the model updates from the scaling coefficient. These last two aspects can possibly be addressed by applying some non-scaled fixed truncation to the model updates at the users' side, along the probable cost of affecting the overall utility. We leave the study of these cases for future investigation.

\section{Experimental Study}\label{sec:experiments}
In this section we numerically evaluate \ac{jopeq} and compare it to 
other approaches realize quantization and privacy in \ac{fl} in Subsection~\ref{subsec:JoPEQevaluation}. Afterwards, we empirically validate the ability of \ac{jopeq} to mitigate privacy leakage upon model inversion mechanism in Subsection~\ref{subsec:DLG}. 
{The source code used in our experimental study, including all the hyper-parameters, is available online at \url{https://github.com/langnatalie/JoPEQ}.}

\subsection{JoPEQ Evaluation Performance}\label{subsec:JoPEQevaluation}

\subsubsection{MNIST Dataset}\label{subsubsec:JoPEQevaluation_mnist}
We first consider the federated training of a handwritten digit classification model using the MNIST dataset \cite{deng2012mnist}. 
The data, comprised of $28 \times 28$ gray-scale images divided into $60,000$  training examples and $10,000$ test examples, is uniformly distributed among $K=10$ users. In each \ac{fl} iteration, the users train their  models using local \ac{sgd} with learning rate $0.1$. 

We evaluate our framework using three different architectures: a linear regression model; a \ac{mlp} with two hidden layers and intermediate ReLU activations; and a \ac{cnn} composed of two convolutional layers followed by two fully-connected ones, with intermediate ReLU activations and max-pooling layers. All three models use a softmax output layer.

We  compare \ac{jopeq} with five other baselines: 
\begin{description}
    \item[{\em FL}:] vanilla \ac{fl}, without privacy or compression constraints. 
    \item[{\em \ac{fl}+\ac{sdq}}:] \ac{fl} with \ac{sdq}-based compression without privacy. 
    \item[{\em \ac{fl}+Lap}:]  \ac{fl} with \lm, without compression. 
    \item[{\em \ac{fl}+Lap+\ac{sdq}}:] the separate application of \lm and \ac{sdq}.
    \item[{\em \ac{mvu}}:] the scheme of \cite{chaudhuri2022privacy}, which introduces discrete-valued \ac{ldp}-preserving perturbation to the quantized representation of the model update.
\end{description}
We use scalar quantizers, i.e., the lattice dimension is $L=1$. For \ac{sdq}, we set $\gamma= 2R + 1/\epsilon$ which numerically assures non-overloaded quantizers with high probability. 

\begin{figure}
	\includegraphics[width=\columnwidth]{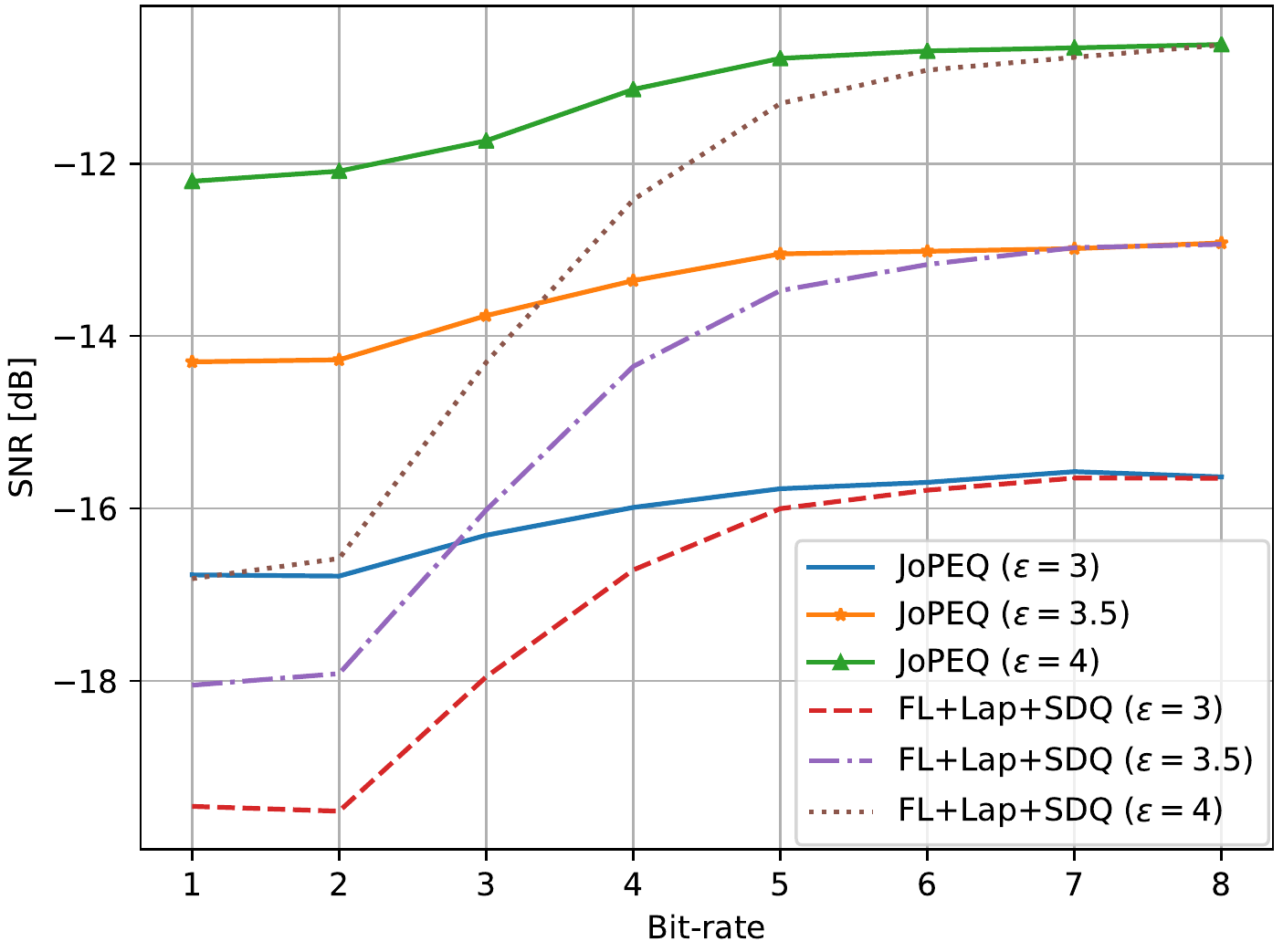}
	\caption{\ac{snr} in received models versus compression bit-rate.}
	\label{fig:SNR}
	\figSpace
\end{figure}

We begin by numerically validating that \ac{jopeq} indeed minimizes the excess distortion compared to individual compression and privacy enhancement operating with the same \ref{itm:ldp}-\ref{itm:rate}. To that aim, we evaluate the average \ac{snr} observed at the server before \ac{fa}, which we compute as the estimated variance of the model weights and divide it by the estimated variance of the distortion, i.e., 
\begin{equation*}
 \text{SNR}\triangleq \frac{1}{K}\sum_{k=1}^K\var(\vh^k_t)/\var(\vh^k_t - \tilde\vh^k_t).   
\end{equation*}
The resulting \ac{snr} values  of \ac{jopeq} are compared with  \ac{fl}+\ac{sdq}+Lap   for different privacy budgets $\epsilon \in\{3,3.5,4\}$  in Fig.~\ref{fig:SNR}. We observe that \ac{jopeq}  yields excess distortion which hardly grows when the bit-rate is decreased, as opposed to separate quantization and \lm. The gains of \ac{jopeq} in excess distortion are thus  dominant in the low bit regime, where quantization induces notable distortion harnessed by \ac{jopeq} for privacy.

Next, we evaluate how the reduced excess distortion of \ac{jopeq}  translates into improved learning. 
\begin{figure}
\includegraphics[width=\columnwidth]{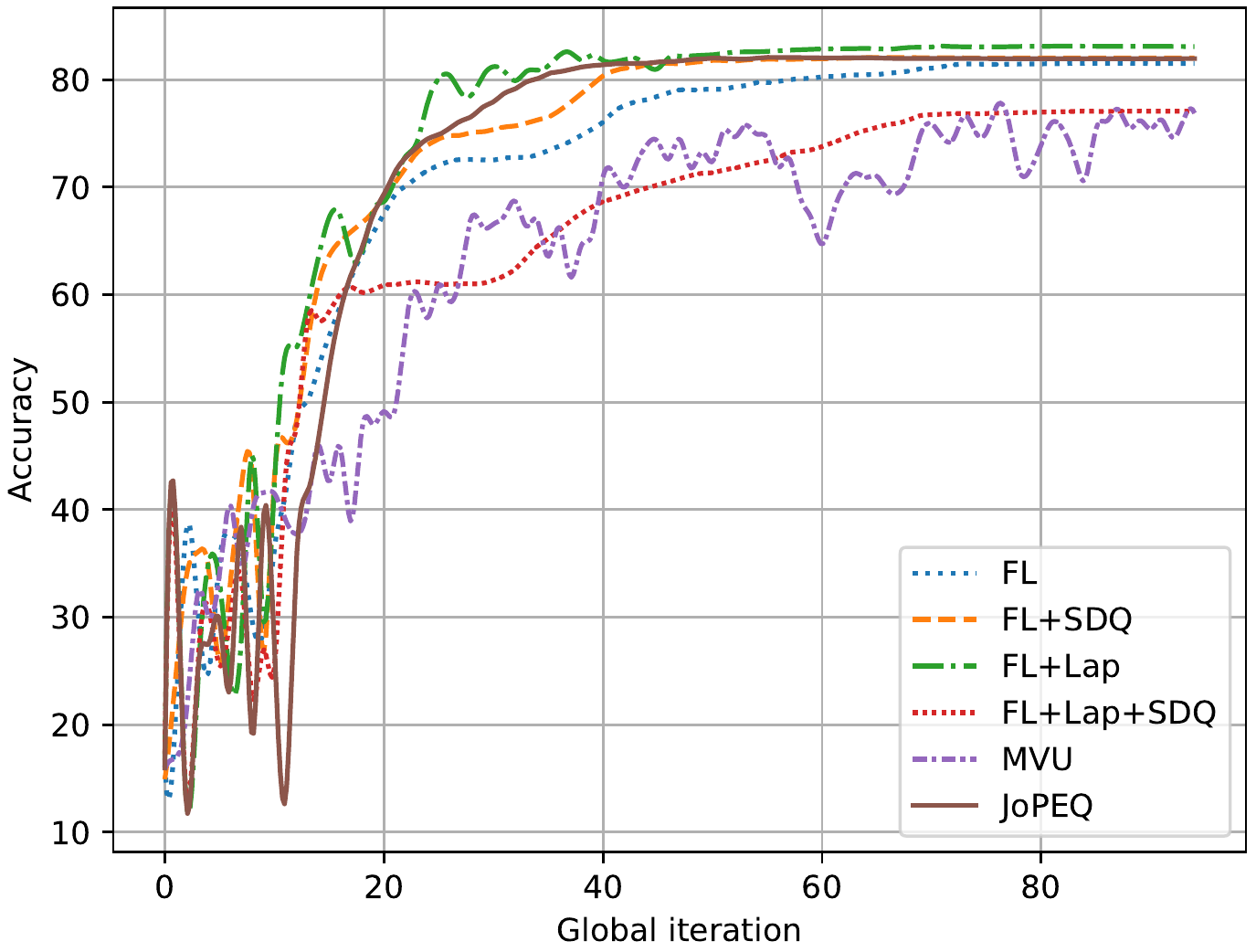}
\caption{Convergence profile of different scalar \ac{fl} schemes.}
\label{fig:learning_curves}
\figSpace
\end{figure}
To that aim, we depict in Fig.~\ref{fig:learning_curves} the validation learning curves for the linear regression model with $R=1$ and $\epsilon=4$. Fig.~\ref{fig:learning_curves} demonstrates that \ac{jopeq} attains almost equivalent performance compared to the alternatives of \ac{fl}+\ac{sdq} and \ac{fl}+Lap (which only meet either \ref{itm:ldp} or \ref{itm:rate}), while simultaneously satisfying both \ref{itm:ldp} and \ref{itm:rate}. We further observe that the disjoint \ac{fl}+Lap+\ac{sdq} as well as the \ac{mvu} scheme of \cite{chaudhuri2022privacy} suffer from excessive distortion as a result of using distinct quantization and privacy mechanism, which deteriorates performance.

\begin{table}
\caption{Test accuracy results for MNIST}
\begin{center}
\begin{tabular}{|c|c|c|c|}
\hline
\cline{2-4} 
 & 
Linear &
MLP & 
CNN\\
\hline
FL & 0.84 & 0.75 & 0.79 \\
FL+SDQ & 0.84 & 0.76 & 0.84 \\
FL+Lap & 0.85 & 0.70 & 0.77\\
FL+Lap+SDQ & 0.78 & 0.55 & 0.75\\
\ac{jopeq} & 0.84 & 0.70 & 0.79\\
\hline
\end{tabular}
\label{tab:architecture_influence}
\end{center}
\end{table}   

In Table~\ref{tab:architecture_influence} we report the baselines test accuracy of the converged models for different examined architectures, which demonstrates that  \ac{jopeq} is beneficial regardless of the model specific design. It is noted that when training deep models, adding a minor level of distortion can improve the converged model, see, e.g., \cite{Guozhong1995NoiseBackprop,sery2021over}. Hence, FL without privacy or compression does not always achieve the best performance. 

\subsubsection{CIFAR-10 Dataset}\label{subsubsec:JoPEQevaluation_cifar10}
\ac{fl} is next implemented for the distributed training of natural image classification model using the CIFAR-10 dataset \cite{CIFAR-10}.
This set is comprised of $32 \times 32$ RGB images divided into $50,000$ training examples and $10,000$ test examples, uniformly distributed among $K=30$ users; each uses local \ac{sgd} with learning rate $0.1$. The  architecture is a \ac{cnn} composed of three convolutional layers followed by four fully-connected ones, with intermediate ReLU activations, max-pooling and dropout layers, and a softmax output layer.

While MNIST was used to validate scalar encoders ($L=1$), here we consider multivariate approaches as baselines:
\begin{description}
    \item[{\em FL}:] vanilla \ac{fl}, without privacy or compression constraints. 
    \item[{\em \ac{fl}+\ac{sdq}$(L=2)$}:] \ac{sdq}-based compression is now used with lattice dimension $L=2$. 
    \item[{\em \ac{fl}+$t$-dist}:] denotes the integration of \tm in \ac{fl} and replaces \ac{fl}+Lap. 
    \item[{\em \ac{fl}+$t$-dist+\ac{sdq}$(L=2)$}:] the separated application of \tm and \ac{sdq}$(L=2)$.
\end{description}
We set the multivariate $t$-distribution parameters to $\boldsymbol \mu = \boldsymbol 0$ and $\mathbf\Sigma = s^2 \boldsymbol I_{2}$ where $\boldsymbol I_{2}$ denotes the $2\times 2$ identity matrix, and $s^2, \nu$ are extracted using \eqref{eq:t_mech_eps} once $\epsilon$ is given. For \ac{sdq}$(L=2)$, we set $\gamma= 1.5 \times \left(1+s^2\nu/(\nu-2)\right)$, numerically assuring low overloading probability. 

\begin{table}
\caption{Test accuracy and SNR results for CIFAR-10}
\begin{center}
\setlength{\tabcolsep}{2.5pt}
\begin{tabular}{|c|c|c|c|c|c|c|}
\hline 
&  \multicolumn{2}{c|}{$R=2, \epsilon=2$} &
\multicolumn{2}{c|}{$R=2, \epsilon=3$} &
\multicolumn{2}{c|}{$R=1, \epsilon=3$} \\
\cline{2-7}
&Acc. & SNR & 
Acc. & SNR & 
Acc. & SNR\\
\hline
FL & 0.73 & $\infty$ & 0.73 & $\infty$ & 0.73 & $\infty$ \\
FL+SDQ$(L=2)$ & 0.72 & 3.53 & 0.72 & 3.53 & 0.72 & -5.34\\
FL+$t$-dist & 0.67 & -23.51 & 0.7 & -15.55 & 0.7 & -15.55\\
FL+$t$-dist+SDQ$(L=2)$ & 0.67  & -21.81   & 0.7 & -15.46 & 0.68 & -17.1\\
\ac{jopeq} & 0.68 & -21.29 & 0.71 & -14.49 & 0.7 & -14.91\\
\hline
\end{tabular}
\label{tab:test_acc_snr_cifar10}
\end{center}
\end{table}   
Table~\ref{tab:test_acc_snr_cifar10} reports the test accuracy and SNR results of the converged baselines for $R\in\{1,2\}$ and $\epsilon\in\{2,3\}$. Comparing the columns of $R=2,\epsilon=2$ and $R=2,\epsilon=3$ demonstrates the privacy budget influence. As expected, higher $\epsilon$ results with less distortion and consequently higher SNR. \ac{fl}+\ac{sdq}$(L=2)$ is invariant to changes in $\epsilon$ values and is the second best, in terms of both accuracy and SNR, after \ac{fl} without compression and privacy considerations, which suffers from no excess distortion.

The two rightmost columns in Table~\ref{tab:test_acc_snr_cifar10} highlight the bit-rate effect. Except for \ac{fl} and \ac{fl}+$t$-dist, that are bit-rate invariant, all baselines show an improvement for $R=2$ compared to $R=1$.
For $R=1, \epsilon=3$ \ac{jopeq} shows the most notable gains compared to the disjoint approach. At the same time, in this extremest bit-rate of $R=1$, the saving in data traffic is the most prominent: in the considered \ac{cnn} architecture for instance, there are $1.7 \cdot 10^5$ learnable weights, that are also the amount of bits need to be transmitted in each iteration, instead of $\approx 10^7$ bits as in the conventional way where each model parameter is represented with double precision.

\subsection{Privacy Leakage Mitigation Evaluation}\label{subsec:DLG}
To empirically validate the ability of \ac{jopeq} to enhance privacy and limit the ability of data leakage attacks, we next assess \ac{jopeq}'s defense performance against model inversion attacks. We consider attacks based on the \ac{dlg} attack mechanism proposed in \cite{zhu2020deep} applied to invert model updates of  neural networks into data samples. In particular, the original \ac{dlg}  generates dummy gradients from dummy inputs, and optimizes the latter to minimize the distance between the former and the real gradients, to yield a successful reconstruction of the training data from the model calculated gradients. 
This approach was enhanced in \cite{zhao2020idlg}, which proposed \ac{idlg}, that  extends \ac{dlg} to separately reconstruct the label beforehand. In order to examine \ac{jopeq} abilities versus challenging attacks, we use \ac{idlg}  and take another step further by supplying the attacker with the ground truth label as part of the input.

To that aim, our setup is the one used in \cite{zhao2020idlg} and includes the same task (image classification), dataset (CIFAR-10), model architecture (LeNet \cite{lecun1989backpropagation}), and optimizer (L-BFGS \cite{liu1989limited}).
In order to evaluate the results and the quality of the reconstructed images compared to the ground truth, we use \ac{ssim} \cite{wang2004image}, which is known to reflect image similarity, in addition to \ac{mse}. We note that the privacy leakage exploited by \ac{dlg} can be applied to a general training procedure involving gradients, regardless of whether or not the latter is conducted in a federated manner. Therefore, to focus solely on the ability of \ac{jopeq} to guarantee privacy, we consider a single user.
In particular, the raw gradients are either forwarded directly to the \ac{idlg} mechanism, or processed beforehand by \ac{jopeq}. Similarly to Subsection~\ref{subsec:JoPEQevaluation}, we also apply \ac{idlg} to gradients distorted by \ac{sdq}-based compression and by \lm.

\begin{figure}
    \centering
    \includegraphics[width=\columnwidth]{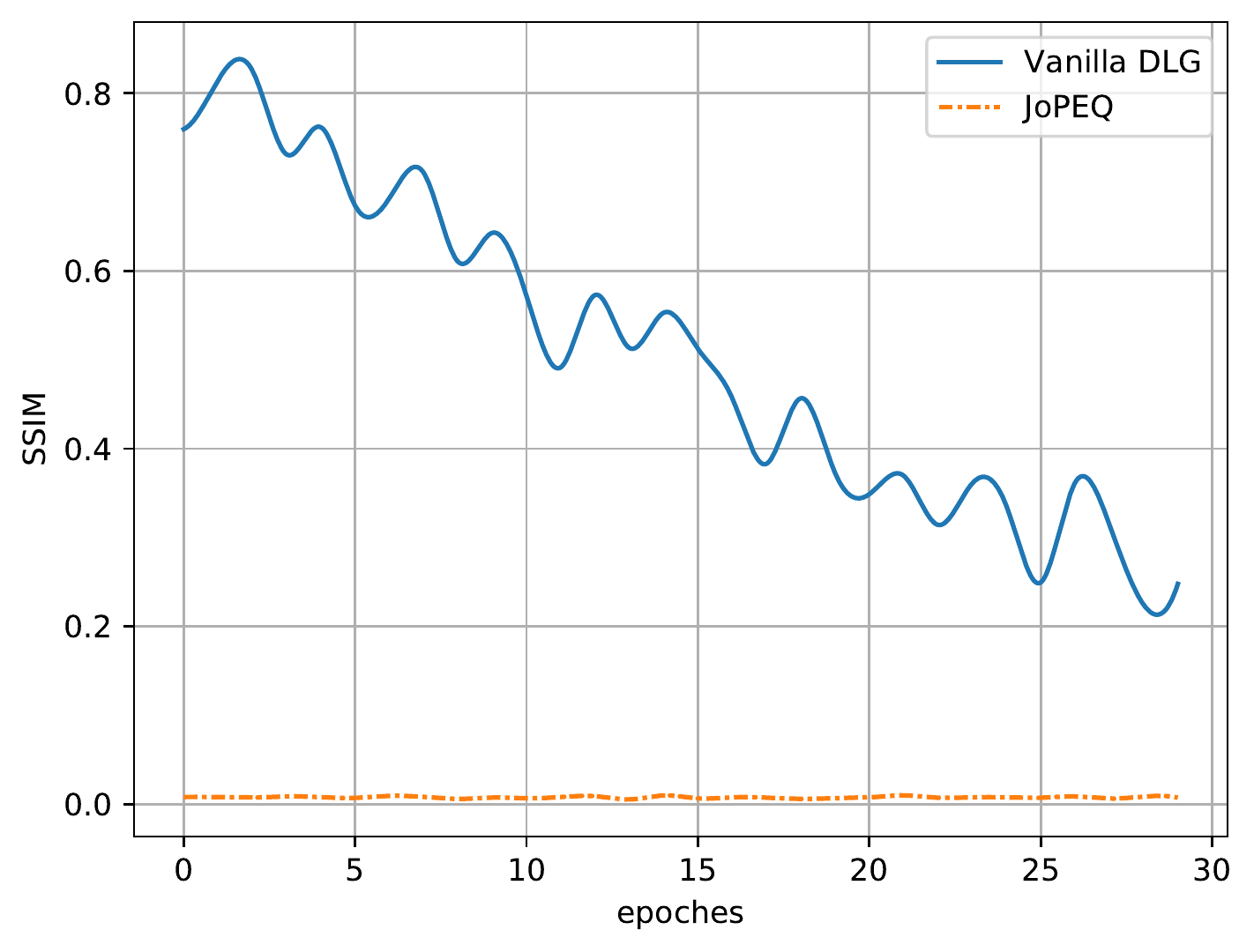}
    \caption{The \ac{ssim} score of vanilla \ac{dlg} and \ac{jopeq} versus epoch number.}
    \figSpace
    \label{fig:ssim_epoch_jopeq_vs_dlg}
\end{figure}

We first validate that the ability of \ac{jopeq} to defend from model inversion attacks does not depend on the model training stage, i.e., it can be activated in any given iteration or epoch number. Fig.~\ref{fig:ssim_epoch_jopeq_vs_dlg} reports the \ac{ssim} score achieved by \ac{idlg} applied directly to the gradients (coined {\em vanilla \ac{dlg}}), through $30$ epochs of training, averaged upon the first $70$ CIFAR-10 images; and the results with the incorporation of \ac{jopeq}, for $\epsilon=3$ and $R=8$. Evidently, vanilla \ac{dlg} struggles more as the training progresses since the gradients tend to be sparser and less informative. However, \ac{jopeq} manages to preserve the privacy  from the first iteration, having its associated \ac{ssim} metric consistently approaches zero. 

\begin{figure}
    \centering
    \includegraphics[width=\columnwidth]{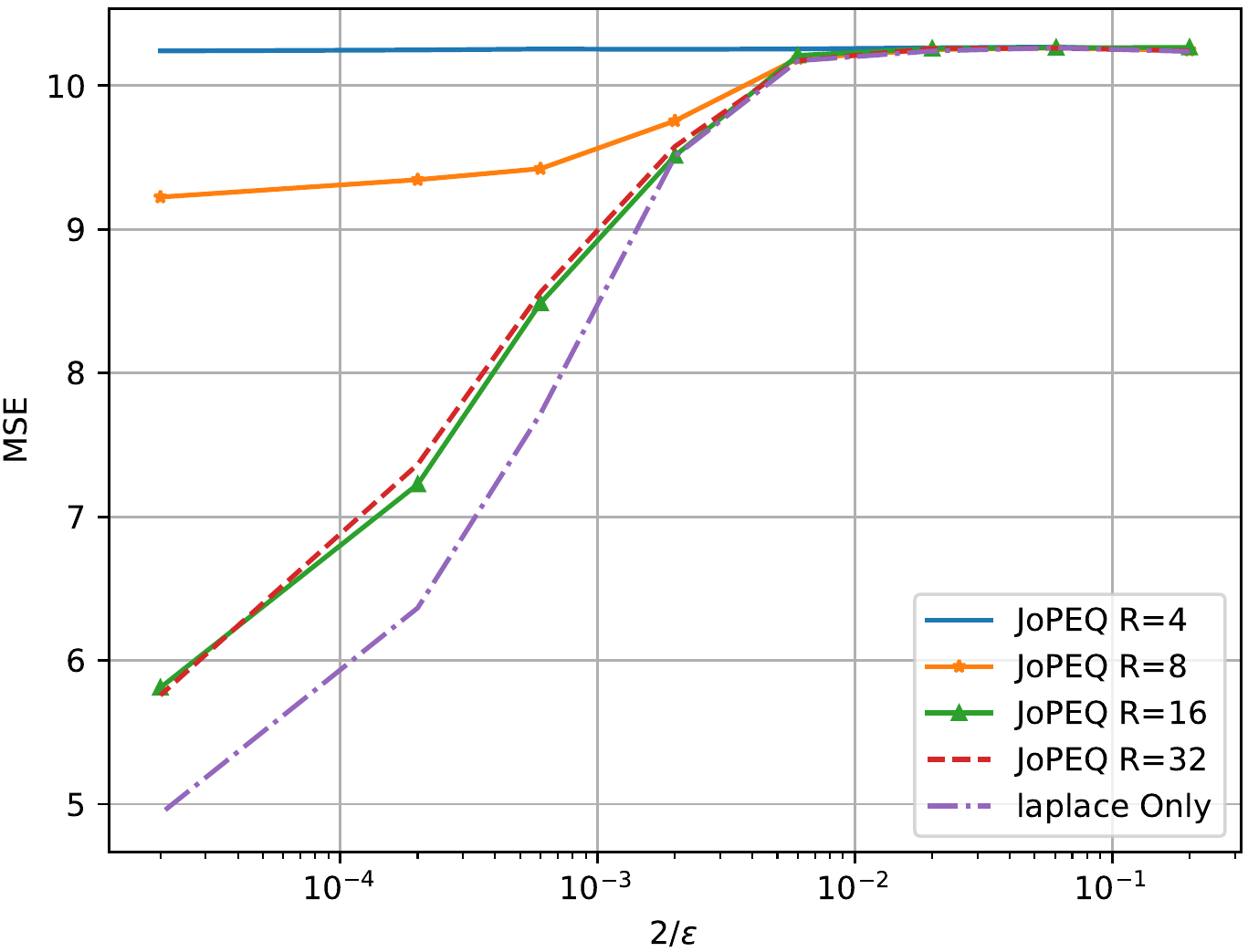}
    \caption{\ac{mse} defense score of \lm and \ac{jopeq} with different bit-raters, for different privacy budgets.}
    \label{fig:JoPEQ_eps_mse_graph}
    \figSpace
\end{figure}
Next, we evlauate the impact of \ac{jopeq}'s bit-rate budget, given a specific privacy budget $\epsilon$, on its privacy preservation performance under \ac{dlg} attack, and also compared that to the reference performance of \lm. The resulting tradeoff between \ac{dlg} reconstruction \ac{mse} and  $2/\epsilon$ are shown in Fig.~\ref{fig:JoPEQ_eps_mse_graph}. 
For fairly loose privacy guarantees, i.e., high $\epsilon$ values, \ac{jopeq} achieves better privacy enhancement compared to \lm, i.e., better than that specified by $\epsilon$, as it leads to inferior reconstruction reflected from the high \ac{mse} values. 
For very strong privacy guarantees, \ac{jopeq} with all examined bit-rates as well as \lm 
successfully demolish the input reconstruction, and result with the highest \ac{mse} value. Furthermore, an interesting interpretation can be given to the curve describing \ac{jopeq} with $R=4$, which saturates even in the regime of high $\epsilon$ values. According to Theorem \ref{thm:PQ_tradeoffs}, the associated privacy enhancement here is effectively achieved by \ac{jopeq} based solely on its compression-induced distortion, and thus does not involve the introduction of additional \ac{ppn}.

\begin{figure}
    \centering
    \includegraphics[width=\columnwidth]{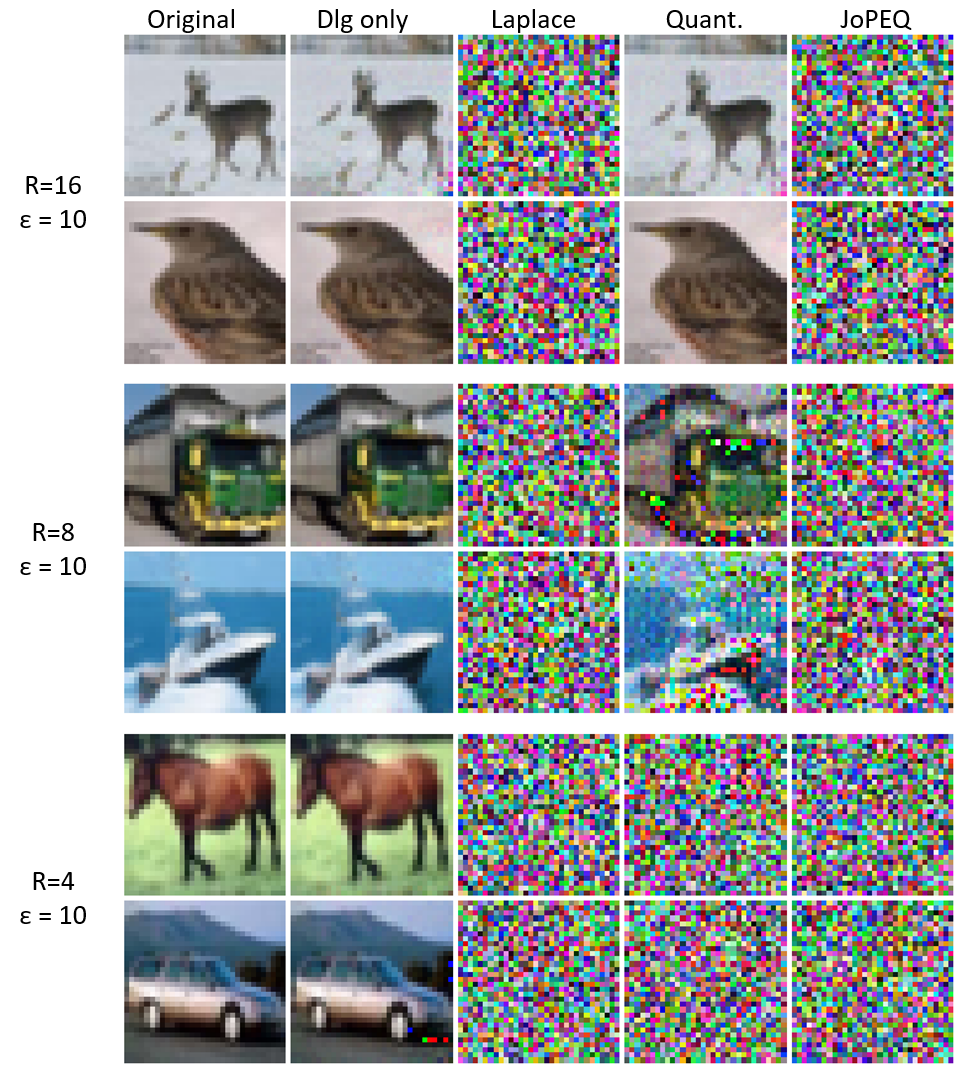}
    \caption{Baselines performances for representative CIFAR-10 examples.}
    \label{fig:DLG_attack_images}
    \figSpace
\end{figure}
Finally, we visually compare between the baselines performances for several representative images in Fig.~\ref{fig:DLG_attack_images}. As expected, vanilla \ac{dlg} results in perfect reconstruction and fails under \lm integration, for privacy budget of $\epsilon=10$. As for \ac{sdq}-based compression, refined quantization holds no privacy defence while a crude one degrades the ability of \ac{dlg} to reconstruct the image used for training. For quantization with $R=8$, the reconstruction is only damaged up to some extent, but \ac{jopeq} with this same bit-rate can finish up the task, and systematically achieves the same ability to demolish model inversion attacks as the \lm. This further assures that \ac{jopeq}, in addition to its being a compression mechanism, is also associated with privacy enhancements. 

\section{Conclusions}\label{sec:conclusions}
We proposed \ac{jopeq}, a unified scheme for jointly compressing and enhancing privacy in \ac{fl}.
\ac{jopeq} utilizes the stochastic nature of \ac{sdq} to boost privacy, combining it with a dedicated \ac{ppn} to yield an exact desired level of distortion satisfying privacy. We proved that \ac{jopeq} can realize different \ac{ldp} mechanisms, and analyzed its associated distortion and convergence, showing that (under some common assumptions) it achieves similar asymptotic convergence profile as \ac{fl} without privacy and compression considerations. We applied \ac{jopeq} for the federated training of different models. We numerically demonstrated that it outcomes with less distorted and more reliable models compared with other applications of compression and privacy in \ac{fl}, while approaching the performance achieved without these constraints, and demolishing common model inversion attacks aim at leaking private date.

\ifproofs
\begin{appendix}
\numberwithin{lemma}{subsection} 
\numberwithin{corollary}{subsection} 
\numberwithin{remark}{subsection} 
\numberwithin{equation}{subsection}	
\subsection{Proof of Theorem \ref{thm:jopeq_holds_tm}}\label{app:jopeq_holds_tm}
\ifextended
To prove the theorem, we first note that by \eqref{eq:jopeq_result} and Theorem \ref{thm:SDQ}, it holds that
\begin{align}
\tilde{\vh}^k_{t,i}&=
    (\zeta_t^k)^{-1}\cdot Q_\cL^{\rm SDQ}(\zeta_t^k\vh^k_{t,i}+\vn^k_i)\notag\\
    &=
    (\zeta_t^k)^{-1}\cdot (\zeta_t^k\vh^k_{t,i}+\vn^k_i + \ve^k_i)\label{eq:jopeq_result_before_scale_app}\\
    &=\vh^k_{t,i}+(\zeta_t^k)^{-1}(\vn^k_i+\ve^k_i)\label{eq:jopeq_result_app}.
\end{align}
Observing \eqref{eq:jopeq_result_before_scale_app}, the private scaled updates $\zeta_t^k\vh^k_{t,i}$ lie in the unit $L$-ball, thus the sensitivity $\left(\Delta\triangleq\max_{x,y\in\D} {\|f(x)-f(y)\|}_2\right)$ is upper bounded by $\sqrt{2}$. This motivates a specific design of $\vn^k_i$ resulting in $\vn^k_i+\ve^k_i$ representing a \tm (Thm.~\ref{thm:tm}). Following \ac{ldp} post-processing property \cite{xiong2020comprehensive}, this also implies $\epsilon$-\ac{ldp} for the non-scaled weights of \eqref{eq:jopeq_result_app}.  To do so, we first note the following lemma which follows from Thm.~\ref{thm:SDQ}:
\begin{lemma}\label{lemma:error_indep_ppn}
For each $i$ and $k$ it holds that $\ve_i^k$ and  $\vn_i^k$ are mutually independent.
\end{lemma}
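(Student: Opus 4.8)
The plan is to obtain the claim as an immediate consequence of Theorem~\ref{thm:SDQ}, applied while treating the PPN as frozen. First I would record the dependency structure fixed by the encoding stage: the dither $\vd_i^k$ is drawn via the shared seed $s_k$, uniformly over $\cP_0$, and is by construction independent of both the update $\vh_{t,i}^k$ and the PPN $\vn_i^k$; the PPN $\vn_i^k$ is drawn via a separate local seed, independently of $\vh_{t,i}^k$. Hence $\vh_{t,i}^k$, $\vd_i^k$, $\vn_i^k$ are mutually independent, and in particular $\vd_i^k\indep(\vh_{t,i}^k,\vn_i^k)$. By definition, $\ve_i^k = Q^{\rm SDQ}_\cL(\zeta_t^k\vh_{t,i}^k+\vn_i^k) - (\zeta_t^k\vh_{t,i}^k+\vn_i^k)$ is a deterministic function of $\zeta_t^k\vh_{t,i}^k+\vn_i^k$ and $\vd_i^k$.

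Next I would condition on an arbitrary admissible value $\vn_i^k=\vn$. In the resulting conditional law, $\zeta_t^k\vh_{t,i}^k+\vn$ is still random through $\vh_{t,i}^k$, it lies inside the lattice support with probability one (this is where the vanishing-overloading hypothesis of Theorem~\ref{thm:jopeq_holds_tm} enters), and $\vd_i^k$ remains uniformly distributed over $\cP_0$ and independent of it. Applying Theorem~\ref{thm:SDQ} with $M=1$ and input $\zeta_t^k\vh_{t,i}^k+\vn$ then shows that, conditionally on $\{\vn_i^k=\vn\}$, the error $\ve_i^k$ is uniformly distributed over $\cP_0$ (and independent of $\zeta_t^k\vh_{t,i}^k+\vn$). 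Since this conditional distribution --- uniform over $\cP_0$ --- does not depend on the conditioning value $\vn$, the conditional law of $\ve_i^k$ given $\vn_i^k$ is constant in $\vn_i^k$, which is precisely mutual independence of $\ve_i^k$ and $\vn_i^k$.

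I expect the only delicate point to be exactly this last passage. A blunt invocation of Theorem~\ref{thm:SDQ} on the combined input $\zeta_t^k\vh_{t,i}^k+\vn_i^k$ yields independence of $\ve_i^k$ from the \emph{sum}, not from the \emph{summand} $\vn_i^k$; the conditioning argument is what upgrades this to the needed marginal independence, and it works only because the SDQ error has the same law (uniform over $\cP_0$) for every frozen value of the PPN. A secondary bookkeeping item is to ensure the overloading hypothesis is read in the conditional world, i.e. that $\Pr\big(\|\zeta_t^k\vh_{t,i}^k+\vn\|\le\gamma \,\big|\, \vn_i^k=\vn\big)=1$ for the relevant $\vn$; this is what the phrase ``a fixed $\zeta_t^k$ that yields vanishing overloading probability'' is meant to secure.
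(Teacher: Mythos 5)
Your proof is correct, and it rests on the same key ingredient as the paper's one-line argument: Theorem~\ref{thm:SDQ} combined with a freezing/conditioning step, under the vanishing-overloading hypothesis. The difference is which summand you freeze. The paper conditions on the model update, setting $\zeta_t^k\vh^k_{t,i}=\vc$, and argues (by contradiction) that since the \ac{sdq} error is independent of the quantizer input $\vc+\vn_i^k$ and $\vc$ is a constant, $\ve_i^k\indep \vc+\vn_i^k$ is equivalent to $\ve_i^k\indep\vn_i^k$. You instead condition on $\vn_i^k=\vn$, let the randomness come from $\vh^k_{t,i}$, and conclude from the constancy of the conditional law of $\ve_i^k$ (uniform over $\cP_0$ for every $\vn$) that $\ve_i^k\indep\vn_i^k$. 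The two routes are dual and equally valid; yours has the merit of making explicit the step that the paper's terse proof leaves implicit, namely that conditional independence plus a conditional error law that does not depend on the conditioning value is what yields the unconditional statement, and of flagging that the non-overloading condition must hold in the conditional world (i.e., for the shifted input $\zeta_t^k\vh^k_{t,i}+\vn$), which is exactly what the hypothesis of Theorems~\ref{thm:jopeq_holds_tm}--\ref{thm:jopeq_holds_lm} is meant to provide. Your side remark that a blunt application of Theorem~\ref{thm:SDQ} only gives independence from the sum, not from the summand, is precisely the subtlety the paper's contradiction argument is designed to circumvent.
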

\begin{proof}
Assume on the contrary that $\ve_i^k\not \indep \vn_i^k$ for some setting $\zeta_t^k\vh^k_{t,i}=\vc$. Since the quantizer is not overloaded, we get from Theorem \ref{thm:SDQ} that $\ve_i^k\indep \vc+\vn_i^k \iff \ve_i^k\indep\vn_i^k$, resulting with a contradiction.
\end{proof}
Next we observe the characteristic function of the overall distortion term. 
Letting $\vt={\left[t_1,\dots,t_L\right]}^T\in\R^L$, the characteristic function is given by
\begin{align}
\notag  \Phi_{\vn_i^k+\ve_i^k}(\vt)
        &=\operatorname{E}
        \left[\exp\left(j\vt^T
        \left(\vn_i^k+\ve_i^k\right)
        \right)\right]\\
\notag  &\stackrel{(a)}{=}
        \operatorname{E}
        \left[\exp
        \left(j\vt^T\vn_i^k
        \right)\right]
        \operatorname{E}
        \left[\exp\left(j\vt^T\ve_i^k\right)\right]\\
        &=\Phi_{\vn_i^k}(\vt)\Phi_{\ve_i^k}(\vt),\label{eq:carac_func_ni+ei}
\end{align}
where $(a)$ follows from Lemma~\ref{lemma:error_indep_ppn}.

Recall that $\ve_i^k$ distributes uniformly  over the basic lattice cell $\cP_0$, which is also symmetric. Consequently, we can write \begin{align}\label{eq:carac_func_ei}
\notag  \Phi_{\ve_i^k}(\vt)&=
        \int_{\cP_0}
        \left(\cos
        \left({\vt}^T
        {\ve'}_i^k\right)
        +j\sin\left(
        {\vt}^T
        {\ve'}_i^k\right)\right)
        \frac{1}{\abs{\cP_0}}d
        {\ve'}_i\\
        &=\int_{\cP_0}
        \frac{\cos\left({\vt}^T
        {\ve'}_i^k\right)}{\abs{\cP_0}}
        d{\ve'}_i^k. 
\end{align}
For \eqref{eq:jopeq_result_before_scale_app} to satisfy the \tm, the overall distortion $\vn^k_i+\ve^k_i$
must obey the multivariate $t$-distribution $\mathbf{t}^d_\nu(\boldsymbol \mu,\mathbf\Sigma)$ which holds \eqref{eq:t_mech_eps}.
Therefore, by \cite{song2014note}, for $\vt \in \R^d$ and $\nu>0$ the characteristic function of $\vn^k_i+\ve^k_i$ satisfies
\begin{align}
\Phi_{\vn^k_i+\ve^k_i}(\vt)
    =\exp(i\vt^{T}\boldsymbol{\mu})
    \frac{
    \|\sqrt{\nu}\boldsymbol{\Sigma}^{1/2}\boldsymbol t\|^{\nu/2}}
    {2^{\nu/2-1}\Gamma(\nu/2)}K_{\nu/2}
    \left(\|\sqrt{\nu}\boldsymbol{\Sigma}^{1/2}\boldsymbol t\|\right).\label{eq:carac_func_noise_is_tdist}
\end{align}
Finally, combining \eqref{eq:carac_func_ni+ei}, \eqref{eq:carac_func_ei} and \eqref{eq:carac_func_noise_is_tdist}, we obtain
\eqref{eq:ppn_high_dim_tm}.  	
\else
To prove the theorem, we first note that by \eqref{eq:jopeq_result} and Theorem \ref{thm:SDQ}, it holds that
\begin{align}
\tilde{\vh}^k_{t,i}&=
    (\zeta_t^k)^{-1}\cdot Q_\cL^{\rm SDQ}(\zeta_t^k\vh^k_{t,i}+\vn^k_i)\notag\\
    &=
    (\zeta_t^k)^{-1}\cdot (\zeta_t^k\vh^k_{t,i}+\vn^k_i + \ve^k_i)\label{eq:jopeq_result_before_scale_app}\\
    &=\vh^k_{t,i}+(\zeta_t^k)^{-1}(\vn^k_i+\ve^k_i)\label{eq:jopeq_result_app}.
\end{align}
Observing \eqref{eq:jopeq_result_before_scale_app}, the private scaled updates $\zeta_t^k\vh^k_{t,i}$ lie in the unit $L$-ball, thus the sensitivity $\left(\Delta\triangleq\max_{x,y\in\D} {\|f(x)-f(y)\|}_2\right)$ is upper bounded by $\sqrt{2}$. This motivates a specific design of $\vn^k_i$ resulting in $\vn^k_i+\ve^k_i$ representing a \tm (Thm.~\ref{thm:tm}). Following \ac{ldp} post-processing property \cite{xiong2020comprehensive}, this also implies $\epsilon$-\ac{ldp} for the non-scaled weights of \eqref{eq:jopeq_result_app}.  To do so, we first note the following straightforward lemma, which follows from Thm.~\ref{thm:SDQ}:
\begin{lemma}\label{lemma:error_indep_ppn}
$\ve_i^k$ and  $\vn_i^k$ are mutually independent $\forall i,k$.
\end{lemma}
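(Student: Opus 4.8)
The plan is to obtain the lemma directly from Theorem~\ref{thm:SDQ}, by regarding the privacy-preserving noise $\vn_i^k$ as part of the data that is fed into the subtractive dithered quantizer, and then disentangling it from the quantization error $\ve_i^k$ by conditioning on the model update $\vh_{t,i}^k$.

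First I would recall the construction: $\vn_i^k$ is drawn independently of $\vh_{t,i}^k$ and of the dither $\vd_i^k$, while $\vd_i^k$ is uniform over $\cP_0$ and independent of $(\vh_{t,i}^k,\vn_i^k)$; moreover $\zeta_t^k$ is chosen so that the quantizer input $\vx_i\triangleq\zeta_t^k\vh_{t,i}^k+\vn_i^k$ lies within the lattice support with probability one (vanishing overloading). Hence the hypotheses of Theorem~\ref{thm:SDQ} are met with the ``data'' vector taken to be $\vx_i$, so $\ve_i^k=Q_\cL^{\rm SDQ}(\vx_i)-\vx_i$ is uniform over $\cP_0$ and independent of $\vx_i$. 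This by itself is not quite the claim, since $\vn_i^k$ is a function of $(\vx_i,\vh_{t,i}^k)$ and independence of $\ve_i^k$ from the sum $\vx_i$ does not immediately give independence from the summand $\vn_i^k$.

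The key step is to run the argument conditionally on $\vh_{t,i}^k$. Fixing $\vh_{t,i}^k=\vc$, the quantity $\zeta_t^k\vc$ becomes a deterministic offset, $\vn_i^k$ retains its unconditional law, and $\vd_i^k$ is still uniform over $\cP_0$ and independent of $\vn_i^k$; Theorem~\ref{thm:SDQ} then applies with data $\zeta_t^k\vc+\vn_i^k$ and yields that $\ve_i^k$ is uniform over $\cP_0$ and independent of $\zeta_t^k\vc+\vn_i^k$. Since $\zeta_t^k\vc$ is constant, the $\sigma$-algebra generated by $\zeta_t^k\vc+\vn_i^k$ coincides with the one generated by $\vn_i^k$, so $\ve_i^k\indep\vn_i^k$ conditionally on $\vh_{t,i}^k=\vc$. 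Because the resulting conditional law of $\ve_i^k$ is the uniform law on $\cP_0$ irrespective of $\vc$, the error $\ve_i^k$ is also unconditionally independent of $\vh_{t,i}^k$, and in fact $\ve_i^k\indep(\vh_{t,i}^k,\vn_i^k)$; projecting onto the second coordinate gives $\ve_i^k\indep\vn_i^k$. The argument is identical for every sub-vector index $i$ and every user $k$. (Equivalently, one may argue by contradiction: if $\ve_i^k\not\indep\vn_i^k$ then, after fixing $\zeta_t^k\vh_{t,i}^k=\vc$, the equivalence $\ve_i^k\indep\vc+\vn_i^k\iff\ve_i^k\indep\vn_i^k$ would contradict Theorem~\ref{thm:SDQ}.)

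I expect the only real obstacle to be bookkeeping rather than anything conceptual: one must check that Theorem~\ref{thm:SDQ} is legitimately invoked with $\vn_i^k$ (together with the fixed offset $\zeta_t^k\vh_{t,i}^k$) in the role of the quantized data — which needs the non-overloading hypothesis to hold \emph{after} adding the PPN, not merely for $\zeta_t^k\vh_{t,i}^k$ alone — and that the conditioning on $\vh_{t,i}^k$ is handled carefully enough to lift the conditional independence and uniformity conclusions to unconditional ones.
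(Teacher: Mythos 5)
Your argument is correct and follows essentially the same route as the paper: fix (condition on) the value of $\vh_{t,i}^k$, apply Theorem~\ref{thm:SDQ} with the shifted \ac{ppn} $\zeta_t^k\vc+\vn_i^k$ in the role of the quantized data, and observe that the constant offset does not change the generated $\sigma$-algebra --- exactly the contradiction argument you give parenthetically, which is the paper's entire proof. Your write-up is in fact a bit more careful than the paper's, since you explicitly lift the conditional independence to unconditional independence of $\ve_i^k$ and $\vn_i^k$ by noting that the conditional error law is uniform on $\cP_0$ for every value of $\vh_{t,i}^k$, and you flag that non-overloading must hold for the input including the \ac{ppn}; both points are left implicit in the paper.
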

Next we observe the characteristic function of the overall distortion term. 
Letting $\vt={\left[t_1,\dots,t_L\right]}^T\in\R^L$, by Lemma~\ref{lemma:error_indep_ppn} the characteristic function is given by
\begin{equation}\label{eq:carac_func_ni+ei}
    \Phi_{\vn_i^k+\ve_i^k}(\vt)=\Phi_{\vn_i^k}(\vt)\Phi_{\ve_i^k}(\vt),
\end{equation}

Recall that $\ve_i^k$ distributes uniformly  over the basic lattice cell $\cP_0$, which is also symmetric. Consequently, we can write \begin{equation}\label{eq:carac_func_ei}
      \Phi_{\ve_i^k}(\vt)=\int_{\cP_0}
        {\abs{\cP_0}}^{-1}{\cos\left({\vt}^T
        {\ve'}_i^k\right)}
        d{\ve'}_i^k. 
\end{equation}
For \eqref{eq:jopeq_result_before_scale_app} to satisfy the \tm, the overall distortion $\vn^k_i+\ve^k_i$
must be distributed as $\mathbf{t}^d_\nu(\boldsymbol \mu,\mathbf\Sigma)$ which holds \eqref{eq:t_mech_eps}.
Therefore, by \cite{song2014note}, for $\vt \in \R^d$ and $\nu>0$ the characteristic function of $\vn^k_i+\ve^k_i$ satisfies
\begin{align}
\Phi_{\vn^k_i+\ve^k_i}(\vt)
    =\exp(i\vt^{T}\boldsymbol{\mu})
    \frac{
    \|\sqrt{\nu}\boldsymbol{\Sigma}^{1/2}\boldsymbol t\|^{\nu/2}}
    {2^{\nu/2-1}\Gamma(\nu/2)}K_{\nu/2}
    \left(\|\sqrt{\nu}\boldsymbol{\Sigma}^{1/2}\boldsymbol t\|\right).\label{eq:carac_func_noise_is_tdist}
\end{align}
Finally, combining \eqref{eq:carac_func_ni+ei}, \eqref{eq:carac_func_ei} and \eqref{eq:carac_func_noise_is_tdist}, we obtain
\eqref{eq:ppn_high_dim_tm}. 
\fi

\subsection{Proof of Theorem \ref{thm:jopeq_holds_lm}}\label{app:jopeq_holds_lm}
The proof outline is identical to that of Appendix \ref{app:jopeq_holds_tm} with the a single difference in the definition of $\Phi_{\vn^k_i+\ve^k_i}(\vt)$. For \eqref{eq:jopeq_result_before_scale_app} to satisfy \lm, the entries of
$\vn^k_i+\ve^k_i\triangleq{\left[n^k_{i,1}+e^k_{i,1},\dots,n^k_{i,L}+e^k_{i,L}\right]}^T$ must obey an \acs{iid} $\lap\left(0,{2}/{\epsilon}\right)$ distribution. Therefore, the characteristic function of $\vn^k_i+\ve^k_i$ satisfies
\ifextended
\begin{align*}
  \Phi_{\vn^k_i+\ve^k_i}(\vt)
        &=\operatorname{E}
        \left[\exp\left(j\vt^T
        \left(\vn^k_i+\ve^k_i\right)\right)\right]\\
        &=\operatorname{E}\left[\prod_{l=1}^L\exp\left(jt_l \left(n^k_{i,l}+e^k_{i,l}\right)\right)\right]\\
        &=\prod_{l=1}^L\operatorname{E}
        \left[\exp\left(jt_l \left(n^k_{i,l}+e^k_{i,l}\right)\right)\right]\\
        &=\prod_{l=1}^L        
        \Phi_{n^k_{i,l}+e^k_{i,l}}\left(jt_l \left(n^k_{i,l}+e^k_{i,l}\right)\right)\\
        &=\prod_{l=1}^L         
        \frac{1}{1+{\left(2t_l/\epsilon\right)}^2},
\end{align*}
\else
\begin{align*}
  \Phi_{\vn^k_i+\ve^k_i}(\vt)
        &=\prod_{l=1}^L        
        \Phi_{n^k_{i,l}+e^k_{i,l}}\left(t_l\right)
        =\prod_{l=1}^L         
        \frac{1}{1+{\left(2t_l/\epsilon\right)}^2},
\end{align*}
\fi
proving the theorem.

\subsection{Proof of Theorem \ref{thm:PQ_tradeoffs}}\label{app:PQ_tradeoffs}
Assuming \ac{jopeq} operates with a lattice quantizer of dimension $L=1$ and holds the \lm, the $l$th entry of $\vn^k_i+\ve^k_i$ satisfies
\begin{align*}
    \var[n^k_{i,l}+e^k_{i,l}]=
    \var\left[\lap\left(\mu=0,b=2/\epsilon\right)\right]=2b^2.
\end{align*}
On the other hand, using Lemma~\ref{lemma:error_indep_ppn},
\begin{align*}
    \var[n^k_{i,l}+e^k_{i,l}]=\var[n^k_{i,l}]+\var[e^k_{i,l}].
\end{align*}
That is, for infinitely small variance PPN, \ac{jopeq} with dynamic rage $\gamma$ and  bit-rate $R$ holds $\epsilon$-\ac{ldp} as long as
\begin{align*}
    2b^2
    \leq& \var[e^k_{i,l}]
    \overset{(a)}{=}\var\left[\mathcal{U}
    \left(\frac{-\Delta_{\rm Q}}{2},
    \frac{\Delta_{\rm Q}}{2}\right)
    \right]={\Delta_{\rm Q}^2}/{12},
\end{align*}
where (a) follows from Theorem~\ref{thm:SDQ} using $L=1$, and $\mathcal{U}(\cdot)$ denotes the continuous uniform distribution. Substituting the fact that $\Delta_{\rm Q} = 2\gamma/2^R$ and $b=2/\epsilon$ we obtain 
\begin{align}
     2(2/\epsilon)^2
    \leq&{(2\gamma/2^R)^2}/{12},
\end{align}
which is equivalent to $\sqrt{24}\leq\varphi\epsilon/2^R$, concluding the proof.

\subsection{Proof of Theorem \ref{thm:Privacy Error Bound}}\label{app:Privacy Error Bound}
\ifextended
Our proof follows a similar outline to that used in \cite{shlezinger2020uveqfed}, with the introduction of additional arguments for handling  privacy constraints, in addition to those of quantization. The unique characteristics of \ac{jopeq}'s error, yield from the subtractive dithered strategy presented in Section \ref{sec:method}, allow us to rigorously incorporate its contribution into the overall proof flow.

In order to bound $\E\left\{\|\tilde\vw_{t+\tau} - \vw_{t+\tau}\|^2\right\}$ let us first express the weights distortion term $\tilde\vw_{t+\tau}-\vw_{t+\tau}$, where $\tilde\vw_{t+\tau}, \vw_{t+\tau}$ are defined in \eqref{eqn:GlobalDist}, \eqref{eq:fl_update} respectively. Similarly to Subsection \ref{subsec:encoding}, we denote by $\{\vv_i\}_{i=1}^M$ the decomposition of a vector $\vv$ into $M$ distinct $L\times 1$ sub-vectors. We have
\begin{align}\label{eq:sub_weights_distortion}
    \tilde\vw_{t+\tau,i}&=\vw_{t,i} + \sum_{k=1}^K \alpha_k\tilde\vh^k_{t+\tau,i}\notag\\
    &=\vw_{t,i} + \sum_{k=1}^K \alpha_k\vh^k_{t+\tau,i} + 
    \sum_{k=1}^K \alpha_k \left(\tilde\vh^k_{t+\tau,i} - \vh^k_{t+\tau,i} \right)\notag\\
    &=\vw_{t+\tau,i} + \sum_{k=1}^K \alpha_k  
    \left(\tilde\vh^k_{t+\tau,i} - \vh^k_{t+\tau,i} \right).
\end{align}
Since $\zeta_t^k = \left(\frac{3}{\sqrt{M}}\|\vh^k_t\|\right)^{-1}$, according to Appendix \ref{app:jopeq_holds_tm} each element in the sum of \eqref{eq:sub_weights_distortion} holds
\begin{align}\label{eq:sub_weights_distortion_explicit}
    \tilde\vh^k_{t+\tau,i} - \vh^k_{t+\tau,i} 
    \overset{\eqref{eq:jopeq_result_app}}{=}
    \frac{\vn^k_i+\ve^k_i}{\zeta_{t+\tau}^k} =
    \frac{3\|\vh^k_{t+\tau}\|}{\sqrt{M}}(\vn^k_i+\ve^k_i),
\end{align}
and consequently
\begin{align}\label{eq:after_cov_incorporation}
        \notag&\E\left\{\|\tilde\vw_{t+\tau} - \vw_{t+\tau}\|^2\right\} \overset{\eqref{eq:sub_weights_distortion}}{=}\\
        \notag&\E\left\{\left\|\sum_{i=1}^M\sum_{k=1}^K \alpha_k\left({{}\tilde\vh^k_{t+\tau}}_{i} - {\vh^k_{t+\tau}}_{i} \right) \right\|^2\right\} \overset{\eqref{eq:sub_weights_distortion_explicit}}{=}\\
        \notag&\E\left\{\left\|\sum_{i=1}^M\sum_{k=1}^K \alpha_k \frac{3}{\sqrt{M}}\|\vh^k_{t+\tau}\|(\vn^k_i+\ve^k_i) \right\|^2\right\} \overset{(a)}{=}\\
        &\notag\E\left\{\E\left\{\left.\left\|\sum_{i=1}^M\sum_{k=1}^K \alpha_k \frac{3}{\sqrt{M}}\|\vh^k_{t+\tau}\|(\vn^k_i+\ve^k_i) \right\|^2 \right\vert \vh^k_{t+\tau} \right\}\right\} \overset{(b)}{=}\\
        \notag&\E\left\{9\sum_{k=1}^K \alpha_k^2\|\vh^k_{t+\tau}\|^2 \E\left\{\left.\left\|(\vn^k_i+\ve^k_i) \right\|^2 \right\vert \vh^k_{t+\tau} \right\}\right\}=\\
        &\E\left\{9\sigma^2\sum_{k=1}^K \alpha_k^2\|\vh^k_{t+\tau}\|^2 \right\},
\end{align}
where (a) follows from the law of total expectation. According to either Theorem \ref{thm:tm} or \ref{thm:lm}, $\{\vn^k_i+\ve^k_i\}$ are \acs{iid} with $\E\left\{\left\|(\vn^k_i+\ve^k_i) \right\|^2\right\} = \sigma^2$ . Therefore, (b) holds by the triangle equality.
Next, if we iterate recursively over \eqref{eq:sgd}, the model update $\vh^k_{t+\tau} = \vw^k_{t+\tau} - \vw_t$ can be written as the sum of the stochastic gradients 
$\vh^k_{t+\tau} = -\sum_{t'=t}^{t+\tau-1} \eta_{t'} \nabla F_k^{i^k_{t'}}\left(\vw^k_{t'}\right)$; where stochasticity stems from the uniformly distributed random indexes $i^k_{t'}$. To utilize this, we first apply the law of total expectation to \eqref{eq:after_cov_incorporation} to yield
\begin{align}\label{eq:app_bounded_wights_distor_norm}
    &\notag\E\left\{\|\tilde\vw_{t+\tau} - \vw_{t+\tau}\|^2\right\}=\\
    &\notag\E\left\{9\sigma^2\sum_{k=1}^K \alpha_k^2
    \E\left\{\left. \|\vh^k_{t+\tau}\|^2 \right\vert \left\{ \vw^k_{t'} \right\} \right\}\right\}=\\
    &\notag\E\left\{9\sigma^2\sum_{k=1}^K \alpha_k^2
    \E\left\{\left. \left\|\sum_{t'=t}^{t+\tau-1} \eta_{t'} \nabla F_k^{i^k_{t'}}\left(\vw^k_{t'}\right) \right\|^2 
    \right\vert \left\{ \vw^k_{t'} \right\} \right\}\right\}  \overset{(a)}{\leq} \\
    &\notag\E\left\{9\sigma^2\sum_{k=1}^K \alpha_k^2 \tau\sum_{t'=t}^{t+\tau-1}\eta_{t'}^2
    \E\left\{\left. \|  \nabla F_k^{i^k_{t'}}\left(\vw^k_{t'}\right) \|^2 
    \right\vert \left\{ \vw^k_{t'} \right\} \right\}\right\}  \overset{(b)}{\leq} \\
    &9\tau\sigma^2 \left(\sum_{t'=t}^{t+\tau-1}\eta_{t'}^2\right) \sum_{k=1}^K \alpha_k^2 \xi^2_k,
\end{align}
where (a) follows from Jensen's inequality $\|\sum_{t'=t}^{t+\tau-1}\vv_t\|^2 \leq \tau\sum_{t'=t}^{t+\tau-1}\|\vv_t\|^2$, viewing the $\ell_2$-norm as a real convex function; and (b) holds since $\E\left\{\|\nabla F_k^{i^k_{t'}}\left(\vw^k_{t'}\right)\|^2\right\} \leq \xi^2_k$ by \ref{itm:bounded_norm}. 
Finally, \eqref{eq:app_bounded_wights_distor_norm} proves the theorem.
\else
Our proof follows a similar outline to that used in \cite{shlezinger2020uveqfed}, with the introduction of additional arguments for handling  privacy constraints, in addition to those of quantization. The unique characteristics of \ac{jopeq}'s error, which follow from the \ac{sdq} strategy presented in Section \ref{sec:method}, allow us to rigorously incorporate its contribution into the overall proof flow.

In order to bound $\E\left\{\|\tilde\vw_{t+\tau} - \vw_{t+\tau}\|^2\right\}$ let us first express the weights distortion term $\tilde\vw_{t+\tau}-\vw_{t+\tau}$, where $\tilde\vw_{t+\tau}, \vw_{t+\tau}$ are defined in \eqref{eqn:GlobalDist}, \eqref{eq:fl_update} respectively. We denote by $\{\vv_i\}_{i=1}^M$ the decomposition of a vector $\vv$ into $M$ distinct $L\times 1$ sub-vectors. It is easy to verify that for $\zeta_t^k = \left(\frac{3}{\sqrt{M}}\|\vh^k_t\|\right)^{-1}$ we have
\begin{equation*}
    \tilde\vw_{t+\tau,i}=\vw_{t+\tau,i} + \sum_{k=1}^K \alpha_k  
    \underbrace{\left(\tilde\vh^k_{t+\tau,i} - \vh^k_{t+\tau,i} \right)}_{=3\|\vh^k_{t+\tau}\|(\vn^k_i+\ve^k_i)/\sqrt{M}}.
\end{equation*}
Consequently,
\begin{align}\label{eq:after_cov_incorporation}
        \notag&\E\left\{\|\tilde\vw_{t+\tau} - \vw_{t+\tau}\|^2\right\}   \overset{(a)}{=}\\
        &\notag\E\left\{\E\left\{\left.\left\|\sum_{i=1}^M\sum_{k=1}^K \alpha_k \frac{3}{\sqrt{M}}\|\vh^k_{t+\tau}\|(\vn^k_i+\ve^k_i) \right\|^2 \right\vert \vh^k_{t+\tau} \right\}\right\} \overset{(b)}{=}\\
        \notag&\E\left\{9\sum_{k=1}^K \alpha_k^2\|\vh^k_{t+\tau}\|^2 \E\left\{\left.\left\|(\vn^k_i+\ve^k_i) \right\|^2 \right\vert \vh^k_{t+\tau} \right\}\right\}=\\
        &\E\left\{9\sigma^2\sum_{k=1}^K \alpha_k^2\|\vh^k_{t+\tau}\|^2 \right\},
\end{align}
where (a) follows from the law of total expectation. According to either Theorem \ref{thm:tm} or \ref{thm:lm}, $\{\vn^k_i+\ve^k_i\}$ are \acs{iid} with $\E\left\{\left\|(\vn^k_i+\ve^k_i) \right\|^2\right\} = \sigma^2$ . Therefore, (b) holds by the triangle equality.
Next, if we iterate recursively over \eqref{eq:sgd}, the model update $\vh^k_{t+\tau} = \vw^k_{t+\tau} - \vw_t$ can be written as the sum of the stochastic gradients 
$\vh^k_{t+\tau} = -\sum_{t'=t}^{t+\tau-1} \eta_{t'} \nabla F_k^{i^k_{t'}}\left(\vw^k_{t'}\right)$; where stochasticity stems from the uniformly distributed random indexes $i^k_{t'}$. To utilize this, we first apply the law of total expectation to \eqref{eq:after_cov_incorporation} to yield
\begin{align}\label{eq:app_bounded_wights_distor_norm}
    &\notag\E\left\{\|\tilde\vw_{t+\tau} - \vw_{t+\tau}\|^2\right\}=\\
    &\notag\E\left\{9\sigma^2\sum_{k=1}^K \alpha_k^2
    \E\left\{\left. \left\|\sum_{t'=t}^{t+\tau-1} \eta_{t'} \nabla F_k^{i^k_{t'}}\left(\vw^k_{t'}\right) \right\|^2 
    \right\vert \left\{ \vw^k_{t'} \right\} \right\}\right\}  \overset{(a)}{\leq} \\
    &\notag\E\left\{9\sigma^2\sum_{k=1}^K \alpha_k^2 \tau\sum_{t'=t}^{t+\tau-1}\eta_{t'}^2
    \E\left\{\left. \|  \nabla F_k^{i^k_{t'}}\left(\vw^k_{t'}\right) \|^2 
    \right\vert \left\{ \vw^k_{t'} \right\} \right\}\right\}  \overset{(b)}{\leq} \\
    &9\tau\sigma^2  \left(\sum_{t'=t}^{t+\tau-1}\eta_{t'}^2\right) \sum_{k=1}^K \alpha_k^2 \xi^2_k,
\end{align}
where (a) follows from Jensen's inequality $\|\sum_{t'=t}^{t+\tau-1}\vv_t\|^2 \leq \tau\sum_{t'=t}^{t+\tau-1}\|\vv_t\|^2$, viewing the $\ell_2$-norm as a real convex function; and (b) holds since $\E\left\{\|\nabla F_k^{i^k_{t'}}\left(\vw^k_{t'}\right)\|^2\right\} \leq \xi^2_k$ by \ref{itm:bounded_norm}. 
Finally, \eqref{eq:app_bounded_wights_distor_norm} proves the theorem.
\fi

\subsection{Proof of Theorem \ref{thm:FL Convergence}}\label{app:FL Convergence}
In the sequel we follow \cite{shlezinger2020uveqfed} and first derive a recursive bound on the weights error, from which we conclude the \ac{fl} convergence bound.
\subsubsection{Recursive Bound on Weights Error}
\ifextended
We first formulate the model update expression, after which we modify that according to the incorporation of \ac{jopeq}.
The local model update is an $m\times 1$ vector given by $\vh^k_{t+1} = \vw^k_{t+1} - \vw^k_t$, which, by \eqref{eq:sgd}, can be written using the stochastic gradient $\vh^k_{t+1} = -\eta_{t} \nabla F_k^{i^k_{t}}\left(\vw^k_{t}\right)$.
On the other hand, every $\tau$ local \ac{sgd} iterations, the $k$th user performs a transmission; that once done with \ac{jopeq} integration, its effect can be modeled as additional noise via
$\tilde{\vh}^k_t=\vh^k_t+\text{err}_t^k$. 
Thus $\text{err}_t^k = \boldsymbol 0$ when $t$ is an integer multiple of $\tau$ and otherwise, by Appendix \ref{app:jopeq_holds_tm}, each $L\times 1$ sub-vector of  $\text{err}_t^k$ scaled by $\zeta_t^k$ obeys the multivariate $t$-distribution $\mathbf{t}^d_\nu(\boldsymbol \mu,\mathbf\Sigma)$ which holds \eqref{eq:t_mech_eps}. The server then sets the new global model as the aggregation of the the edge users updates via \eqref{eq:fl_update}, and send it back to the users who start over the local \ac{sgd}. The overall procedure can be compactly written~as
\begin{align*}
    \notag &\vw^k_{t+1}=\\
    &  \begin{cases*}
            \vw^k_t -\eta_{t}\nabla F_k^{i^k_{t}}\left(\vw^k_{t}\right)+\underbrace{\text{err}_t^k}_{=0} 
            \quad\text{ if } t+1 \neq n\cdot\tau,\ n\in\N,\\
            \sum_{k'=1}^K \alpha_{k'}\left(\vw^{k'}_t -\eta_{t}\nabla F_{k'}^{i^{k'}_t}\left(\vw^{k'}_{t}\right)+\text{err}_t^{k'} \right)
            \quad\text{ else}.
        \end{cases*}
\end{align*}

As in \cite{shlezinger2020uveqfed}, we next define a virtual sequence $\{\vv_t\}$ from $\{\vw^k_t\}$ which can be shown to behave almost like  mini-batch \ac{sgd} wit  batch size $\tau$, while being within a bounded distance of the \ac{fl} model weights $\{\vw^k_t\}$, by properly setting the step-size $\eta_t$. That is, the virtual sequence is given by
\begin{align}\label{eq:virtual_seq}
    \vv_t \triangleq \sum_{k=1}^K \alpha_{k} \vw^k_t,
\end{align}
which coincides with $\vw^k_t$ when $t$ is an integer multiple of $\tau$. We further define the averaged noisy stochastic gradients and the averaged full gradients as
\begin{align}
    \tilde\vg_t &\triangleq \sum_{k=1}^K \alpha_{k}\left(\nabla F_{k}^{i^{k}_t}\left(\vw^{k}_{t}\right)-\frac{1}{\eta_t}\text{err}_t^{k} \right),\\
    \vg_t &\triangleq \sum_{k=1}^K \alpha_{k}\nabla F_{k}\left(\vw^{k}_{t}\right),
\end{align}
respectively. Note that for the specific choice of $\mathbf{t}^d_\nu(\boldsymbol 0,\mathbf\Sigma)$ \ac{jopeq}'s error is zero-mean, and as the sample indexes $\{i^{k}_t\}$ are independent and uniformly distributed, it holds that $\E\{\tilde\vg_t\}=\vg_t$. Additionally, the virtual sequence \eqref{eq:virtual_seq} satisfies $\vv_{t+1}=\vv_t - \eta_t\tilde\vg_t$.

The resulting model is thus equivalent to that used in \cite[App. C]{shlezinger2020uveqfed}, and therefore, by assumptions \ref{itm:obj_smooth}-\ref{itm:obj_convex} it follows that if $\eta_t\leq\frac{1}{4\rho_s}$ then
\begin{multline}\label{eq:expected_distance_bound}
    \E\left\{\left\| \vv_{t+1}-\vw^{\rm opt} \right\|^2\right\}\leq 
    (1-\eta_t\rho_c) \E\left\{\left\| \vv_t-\vw^{\rm opt} \right\|^2\right\} \\
     +6\rho_s\eta^2_t\psi
     + \eta^2_t \E\left\{\left\| \tilde\vg_t-\vg_t \right\|^2\right\} \\
    + 2\E\left\{\sum_{k=1}^K\alpha_{k}\left\|\vv_t-\vw^k_t \right\|^2\right\},
\end{multline}
where $\vw^{\rm opt},\ \psi$ are defined in \eqref{eq:w_opt_def}, \eqref{eq:psi_heterogeneity_gap} respectively. The expression in \eqref{eq:expected_distance_bound} bounds the expected distance between the virtual sequence $\{\vv_t\}$ and the optimal weights $\vw^{\rm opt}$ in a recursive manner. We further bound the summands in \eqref{eq:expected_distance_bound}, using the following lemmas:

\begin{lemma}\label{lemma:g_functions_bound}
If the step-size $\eta_t$ is non-increasing and satisfies $\eta_t \leq 2\eta_{t+\tau}$ for each $t\geq 0$, then, when Assumption \ref{itm:bounded_norm} is satisfied, it holds that
 \begin{align}\label{eq:g_functions_bound}
    \eta^2_t \E\left\{\left\| \tilde\vg_t-\vg_t \right\|^2\right\} \leq \left(1+36\tau^2\sigma^2\right) 
    \eta_{t}^2\sum_{k=1}^K \alpha_k^2 \xi^2_k.
\end{align}
\end{lemma}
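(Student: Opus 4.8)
The plan is to split the gap $\tilde\vg_t-\vg_t$ into a sampling-noise part and a \ac{jopeq}-error part, show the two are uncorrelated, and bound each second moment separately using Assumption~\ref{itm:bounded_norm}. Writing
\begin{equation*}
\tilde\vg_t-\vg_t=\sum_{k=1}^K\alpha_k\big(\nabla F_k^{i^k_t}(\vw^k_t)-\nabla F_k(\vw^k_t)\big)-\frac1{\eta_t}\sum_{k=1}^K\alpha_k\,\text{err}_t^k,
\end{equation*}
I would first note that, conditioned on the iterates $\{\vw^k_{t'}\}$, the first sum is zero-mean (since $i^k_t$ is drawn uniformly, $\E\{\nabla F_k^{i^k_t}(\vw^k_t)\mid\vw^k_t\}=\nabla F_k(\vw^k_t)$) and independent across $k$, whereas — by Theorem~\ref{thm:SDQ} together with Theorems~\ref{thm:jopeq_holds_tm} and~\ref{thm:jopeq_holds_lm} — the errors $\text{err}_t^k$ are zero-mean (for $\boldsymbol\mu=\boldsymbol 0$), mutually independent across $k$, and independent of the data, hence of the first sum. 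Thus the cross term vanishes and $\E\{\|\tilde\vg_t-\vg_t\|^2\}$ splits as the second moment of the sampling part plus $\eta_t^{-2}$ times the second moment of $\sum_k\alpha_k\,\text{err}_t^k$.

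For the sampling part I would use the across-$k$ independence together with the ``variance $\le$ second moment'' bound to get the conditional estimate $\sum_k\alpha_k^2\,\E\{\|\nabla F_k^{i^k_t}(\vw^k_t)\|^2\mid\{\vw^k_{t'}\}\}$, then take the outer expectation and invoke~\ref{itm:bounded_norm}, obtaining $\sum_k\alpha_k^2\xi_k^2$. For the \ac{jopeq} part, note it is nonzero only at the synchronization steps ($t+1$ a multiple of $\tau$); there, conditional independence across users gives $\sum_k\alpha_k^2\,\E\{\|\text{err}_t^k\|^2\mid\{\vw^k_{t'}\}\}$. Recalling $\zeta_t^k=(\tfrac3{\sqrt M}\|\vh^k_t\|)^{-1}$, each of the $M$ sub-vectors of $\text{err}_t^k$ equals $\tfrac3{\sqrt M}\|\vh^k_t\|(\vn^k_i+\ve^k_i)$ with $\vn^k_i+\ve^k_i$ independent of the weights and $\E\{\|\vn^k_i+\ve^k_i\|^2\}=\sigma^2$; summing over $i$ yields $\E\{\|\text{err}_t^k\|^2\mid\{\vw^k_{t'}\}\}=9\sigma^2\|\vh^k_t\|^2$.

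The last step controls $\E\{\|\vh^k_t\|^2\}$. Iterating~\eqref{eq:sgd} over the current round expresses $\vh^k_t=-\sum_{t'=t-\tau+1}^{t}\eta_{t'}\nabla F_k^{i^k_{t'}}(\vw^k_{t'})$, so $\|\vh^k_t\|^2\le\tau\sum_{t'=t-\tau+1}^{t}\eta_{t'}^2\|\nabla F_k^{i^k_{t'}}(\vw^k_{t'})\|^2$ by Jensen's inequality for the squared norm, and taking expectations with~\ref{itm:bounded_norm} gives $\E\{\|\vh^k_t\|^2\}\le\tau\sum_{t'=t-\tau+1}^{t}\eta_{t'}^2\xi_k^2$. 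Since $\eta$ is non-increasing and the hypothesis $\eta_s\le2\eta_{s+\tau}$ (at $s=t-\tau+1$) gives $\eta_{t-\tau+1}\le2\eta_{t+1}\le2\eta_t$, every step-size in the window $t'\in[t-\tau+1,t]$ is at most $2\eta_t$, so $\E\{\|\vh^k_t\|^2\}\le4\tau^2\eta_t^2\xi_k^2$ and hence $\E\{\|\text{err}_t^k\|^2\}\le36\tau^2\sigma^2\eta_t^2\xi_k^2$. Collecting, $\E\{\|\tilde\vg_t-\vg_t\|^2\}\le\sum_k\alpha_k^2\xi_k^2+36\tau^2\sigma^2\sum_k\alpha_k^2\xi_k^2$; multiplying by $\eta_t^2$ gives~\eqref{eq:g_functions_bound}, and steps $t$ with $\text{err}_t^k=\boldsymbol 0$ are trivially subsumed. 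I expect the main obstacle to be not the algebra but being scrupulous about the independence and zero-mean statements that legitimize the conditional-expectation factorizations — exactly what Theorem~\ref{thm:SDQ} and the choice $\boldsymbol\mu=\boldsymbol 0$ provide — and recognizing that it is precisely the ``non-increasing with bounded $\tau$-ratio'' hypothesis on $\eta_t$ that lets the $\tau$ in-round step-sizes be absorbed into $\eta_t$.
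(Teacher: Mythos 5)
Your proof is correct and takes essentially the same route as the paper, whose own ``proof'' of this lemma simply defers to \cite[App.~C]{shlezinger2020uveqfed} with the substitutions $\zeta^2=9/M$ and $\bar{\sigma}^2_{\cL}=\sigma^2$: decompose $\tilde\vg_t-\vg_t$ into the SGD sampling noise plus $\eta_t^{-1}$ times the JoPEQ error, kill the cross terms by (conditional) zero mean and independence, compute $\E\{\|{\rm err}_t^k\|^2\mid\vh^k_t\}=9\sigma^2\|\vh^k_t\|^2$, and bound $\E\{\|\vh^k_t\|^2\}\le 4\tau^2\eta_t^2\xi_k^2$ via \ref{itm:bounded_norm} and the step-size condition, giving the factor $36\tau^2\sigma^2$. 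One wording nit: ${\rm err}_t^k$ is not literally independent of the data (its scale is $3\|\vh^k_t\|/\sqrt{M}$), but your subsequent conditioning on the iterates, with $\vn^k_i+\ve^k_i$ zero-mean and independent of them, is exactly what is needed, so the argument stands as written.
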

\begin{lemma}\label{lemma:v_w_functions_bound}
If the step-size $\eta_t$ is non-increasing and satisfies $\eta_t \leq 2\eta_{t+\tau}$ for each $t\geq 0$, then, when Assumption \ref{itm:bounded_norm}, it holds that
\begin{align}\label{eq:v_w_functions_bound}
    \E\left\{\sum_{k=1}^K\alpha_{k}\left\|\vv_t-\vw^k_t \right\|^2\right\} \leq 4(\tau-1)^2
    \eta_{t}^2\sum_{k=1}^K \alpha_k \xi^2_k,
\end{align}
\end{lemma}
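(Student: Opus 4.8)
The plan is to exploit that $\vv_t$ is the $\alpha$-weighted mean of $\{\vw^k_t\}_{k=1}^K$ together with the fact that, by construction of the virtual sequence, all local models coincide at the most recent synchronization step. Let $t_0 \triangleq \tau\lfloor t/\tau\rfloor \leq t$ denote this step, so that $\vw^k_{t_0}=\vv_{t_0}$ for every $k$ and $0\leq t-t_0\leq\tau-1$; the bound is trivial when $t-t_0=0$, so assume $t-t_0\geq 1$. Since $\sum_{k}\alpha_k=1$, the $\alpha$-weighted mean minimizes the $\alpha$-weighted sum of squared deviations, and taking the fixed reference point $\vv_{t_0}$ gives
\[
\sum_{k=1}^K\alpha_k\,\|\vv_t-\vw^k_t\|^2 \;\leq\; \sum_{k=1}^K\alpha_k\,\|\vw^k_t-\vv_{t_0}\|^2 \;=\;\sum_{k=1}^K\alpha_k\,\|\vw^k_t-\vw^k_{t_0}\|^2 .
\]
Thus it suffices to bound the per-user drift $\E\{\|\vw^k_t-\vw^k_{t_0}\|^2\}$ accumulated within a single synchronization round.

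I would then unroll the local \ac{sgd} recursion \eqref{eq:sgd} over $s=t_0,\dots,t-1$. No aggregation or transmission takes place strictly between two synchronization instants, so the \ac{jopeq} perturbation $\text{err}^k_s$ is absent here and $\vw^k_t-\vw^k_{t_0}=-\sum_{s=t_0}^{t-1}\eta_s\nabla F_k^{i^k_s}(\vw^k_s)$. Applying Jensen's inequality to this sum of at most $\tau-1$ terms (exactly as in Appendix~\ref{app:Privacy Error Bound}), taking expectations, and invoking Assumption~\ref{itm:bounded_norm} yields
\[
\E\{\|\vw^k_t-\vw^k_{t_0}\|^2\}\;\leq\;(\tau-1)\sum_{s=t_0}^{t-1}\eta_s^2\,\E\{\|\nabla F_k^{i^k_s}(\vw^k_s)\|^2\}\;\leq\;(\tau-1)\,\xi_k^2\sum_{s=t_0}^{t-1}\eta_s^2 .
\]

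It then remains to trade the step-sizes inside the round for $\eta_t$. Every index $s$ with $t_0\leq s\leq t-1$ satisfies both $s<t$ and $s+\tau\geq t_0+\tau>t$, so by the monotonicity of $\eta_t$ and the hypothesis $\eta_s\leq 2\eta_{s+\tau}$ we get $\eta_s\leq 2\eta_{s+\tau}\leq 2\eta_t$, whence $\sum_{s=t_0}^{t-1}\eta_s^2\leq 4(\tau-1)\eta_t^2$. Substituting back gives $\E\{\|\vw^k_t-\vw^k_{t_0}\|^2\}\leq 4(\tau-1)^2\eta_t^2\xi_k^2$, and multiplying by $\alpha_k$ and summing over $k$ reproduces \eqref{eq:v_w_functions_bound}. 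The argument is otherwise the standard bounded-local-drift estimate for local \ac{sgd} (as in \cite{shlezinger2020uveqfed}); the only point requiring care is the chained step-size bound $\eta_s\leq 2\eta_{s+\tau}\leq 2\eta_t$, i.e.\ verifying that every $s$ in the current block has $s+\tau>t$, together with the observation that \ac{jopeq}'s injected noise does not enter because it is confined to the aggregation steps that bracket each block.
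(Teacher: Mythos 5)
Your proof is correct and follows essentially the same route as the paper, which simply imports this bound from \cite[App.~C]{shlezinger2020uveqfed} (itself the standard local-SGD drift argument of \cite{li2019convergence}): anchor at the last synchronization instant, where all local models coincide with the virtual sequence, use the variational property of the weighted mean, unroll the intra-block SGD updates, and apply Jensen, Assumption~\ref{itm:bounded_norm}, and the step-size chain $\eta_s\le 2\eta_{s+\tau}\le 2\eta_t$. Your explicit check that the \ac{jopeq} perturbation term does not enter within a block (it is confined to the aggregation instants) is the correct JoPEQ-specific observation and matches the paper's recursion.
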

The proof of Lemma \ref{lemma:v_w_functions_bound} is given in \cite[App. C]{shlezinger2020uveqfed}, and so that of Lemma \ref{lemma:g_functions_bound}, with $\zeta^2=9/M$ and ${\bar{\sigma}}^2_{\cL}=\sigma^2$.

Next, we define $\delta_t\triangleq\E\left\{\left\| \vv_t-\vw^{\rm opt} \right\|^2\right\}$. When $t=n\cdot\tau,\ n\in\N$, the term $\delta_t$ represents the $\ell_2$-norm of the error in the weights of the global model. Using Lemmas \ref{lemma:g_functions_bound}-\ref{lemma:v_w_functions_bound}, while integrating \eqref{eq:g_functions_bound} and \eqref{eq:v_w_functions_bound} into \eqref{eq:expected_distance_bound}, we obtain the following recursive relationship on the weights error:
\begin{align}\label{eq:recursive_delta}
    \delta_{t+1}&\leq(1-\eta_t\rho_c)\delta_t+\eta^2_tb, \text{ where }\\
    \notag b &\triangleq 
    \left(1+36\tau^2\sigma^2\right)\sum_{k=1}^K\alpha^2_k\xi^2_k 
    + 6\rho_s\psi
    + 8(\tau-1)^2\sum_{k=1}^K\alpha_k\xi^2_k.
\end{align}
The relationship in \eqref{eq:recursive_delta} is used in the sequel to prove the \ac{fl} convergence bound stated in Theorem \ref{thm:FL Convergence}.
\else
Recall that every $\tau$ local \ac{sgd} iterations, the $k$th user performs a transmission, that once done with \ac{jopeq} integration, its effect can be modeled as additional noise via $\tilde{\vh}^k_t=\vh^k_t+\text{err}_t^k$. 
Thus, $\text{err}_t^k = \boldsymbol 0$ when $t$ is an integer multiple of $\tau$ and otherwise, by Appendix \ref{app:jopeq_holds_tm}, each $L\times 1$ sub-vector of  $\text{err}_t^k$ scaled by $\zeta_t^k$ obeys the multivariate $t$-distribution $\mathbf{t}^d_\nu(\boldsymbol \mu,\mathbf\Sigma)$ which holds \eqref{eq:t_mech_eps}. The overall procedure can be compactly written~as
\begin{align*}
    \notag &\vw^k_{t+1}=\\
    &  \begin{cases*}
            \vw^k_t -\eta_{t}\nabla F_k^{i^k_{t}}\left(\vw^k_{t}\right)+\underbrace{\text{err}_t^k}_{=0} 
            \quad\text{ if } t+1 \neq n\cdot\tau,\ n\in\N,\\
            \sum_{k'=1}^K \alpha_{k'}\left(\vw^{k'}_t -\eta_{t}\nabla F_{k'}^{i^{k'}_t}\left(\vw^{k'}_{t}\right)+\text{err}_t^{k'} \right)
            \quad\text{ else}.
        \end{cases*}
\end{align*}

As in \cite{shlezinger2020uveqfed}, we next define $\vv_t \triangleq \sum_{k=1}^K \alpha_{k} \vw^k_t$, and the averaged noisy stochastic as well as the full gradients 
\begin{align}
    \tilde\vg_t &\triangleq \sum_{k=1}^K \alpha_{k}\left(\nabla F_{k}^{i^{k}_t}\left(\vw^{k}_{t}\right)-\frac{1}{\eta_t}\text{err}_t^{k} \right),\\
    \vg_t &\triangleq \sum_{k=1}^K \alpha_{k}\nabla F_{k}\left(\vw^{k}_{t}\right),
\end{align}
respectively. Note that for the specific choice of $\mathbf{t}^d_\nu(\boldsymbol 0,\mathbf\Sigma)$ \ac{jopeq}'s error is zero-mean, and it further holds that $\E\{\tilde\vg_t\}=\vg_t$ and $\vv_{t+1}=\vv_t - \eta_t\tilde\vg_t$.

The resulting model is thus equivalent to that used in \cite[App. C]{shlezinger2020uveqfed}, and therefore, by assumptions \ref{itm:obj_smooth}-\ref{itm:obj_convex} it follows that if $\eta_t\leq\frac{1}{4\rho_s}$ then
\begin{multline}\label{eq:expected_distance_bound}
    \E\left\{\left\| \vv_{t+1}-\vw^{\rm opt} \right\|^2\right\}\leq 
    (1-\eta_t\rho_c) \E\left\{\left\| \vv_t-\vw^{\rm opt} \right\|^2\right\} \\
     +6\rho_s\eta^2_t\psi
     + \eta^2_t \E\left\{\left\| \tilde\vg_t-\vg_t \right\|^2\right\} \\
    + 2\E\left\{\sum_{k=1}^K\alpha_{k}\left\|\vv_t-\vw^k_t \right\|^2\right\},
\end{multline}
where $\vw^{\rm opt},\ \psi$ are defined in \eqref{eq:w_opt_def}, \eqref{eq:psi_heterogeneity_gap} respectively. We further bound the summands in \eqref{eq:expected_distance_bound}, using Lemmas C.1 and C.2 in  \cite[App. C]{shlezinger2020uveqfed}
 \begin{align}
    \eta^2_t \E\left\{\left\| \tilde\vg_t-\vg_t \right\|^2\right\} \leq \left(1+36\tau^2\sigma^2\right) 
    \eta_{t}^2\sum_{k=1}^K \alpha_k^2 \xi^2_k, \label{eq:g_functions_bound}\\
    \E\left\{\sum_{k=1}^K\alpha_{k}\left\|\vv_t-\vw^k_t \right\|^2\right\} \leq 4(\tau-1)^2
    \eta_{t}^2\sum_{k=1}^K \alpha_k \xi^2_k. \label{eq:v_w_functions_bound}
\end{align}

Next, we define $\delta_t\triangleq\E\left\{\left\| \vv_t-\vw^{\rm opt} \right\|^2\right\}$. When $t=n\cdot\tau,\ n\in\N$, the term $\delta_t$ represents the $\ell_2$-norm of the error in the weights of the global model. Integrating \eqref{eq:g_functions_bound} and \eqref{eq:v_w_functions_bound} into \eqref{eq:expected_distance_bound}, we obtain the following recursive relationship on the weights error:
\begin{align}\label{eq:recursive_delta}
    \delta_{t+1}&\leq(1-\eta_t\rho_c)\delta_t+\eta^2_tb, \text{ where }\\
    \notag b &\triangleq 
    \left(1+36\tau^2\sigma^2\right)\sum_{k=1}^K\alpha^2_k\xi^2_k 
    + 6\rho_s\psi
    + 8(\tau-1)^2\sum_{k=1}^K\alpha_k\xi^2_k.
\end{align}
The relationship in \eqref{eq:recursive_delta} is used in the sequel to prove the \ac{fl} convergence bound stated in Theorem \ref{thm:FL Convergence}.
\fi
\subsubsection{FL Convergence Bound} 
\ifextended
Here, we prove Theorem \ref{thm:FL Convergence} based on the recursive relationship in \eqref{eq:recursive_delta}. This is achieved by properly  setting the step-size and the \ac{fl} systems parameters in \eqref{eq:recursive_delta} to bound $\delta_t:\E\left\{\left\| \vv_t-\vw^{\rm opt} \right\|^2\right\}$, and combining the resulting bound with the strong convexity of the objective \ref{itm:obj_convex} to prove \eqref{eq:FL_Convergence}.

In particular, we set the step-size $\eta_t$ to take the form $\eta_t=\frac{\beta}{t+\varphi}$ for some $\beta>0$ and $\varphi \geq \max (4\rho_s \beta, \tau)$, for which $\eta_t\leq\frac{1}{4\rho_s}$ and $\eta_t\leq 2 \eta_{t+\tau}$, implying that \eqref{eq:expected_distance_bound}, \eqref{eq:g_functions_bound} and \eqref{eq:v_w_functions_bound} hold. 
Under such settings, in  \cite[App. C]{shlezinger2020uveqfed} is it proved by induction that for $\lambda\geq\max\left(\frac{1+\beta^2b}{\beta\rho_c},\varphi\delta_0\right)$ is holds that $\delta_t\leq\frac{\lambda}{t+\varphi}$ for all integer $t\geq0$.
Finally, the smoothness of the objective \ref{itm:obj_smooth} implies that
\begin{align}\label{eq:smoothness_implies}
    \E\{F(\vw_t)-F(\vw^{\rm opt})\}\leq\frac{\rho_s}{2}\delta_t\leq\frac{\rho_2\lambda}{2(t+\varphi)}.
\end{align}
Setting $\beta=\frac{\tau}{\rho_c}$ results in $\varphi\geq\tau\max(1,4\rho_s/\rho_c)$ and $\lambda\geq\max(\frac{\rho^2_c+\tau^2b}{\tau\rho_c},\varphi\delta_0)$, once substituted into \eqref{eq:smoothness_implies}, proves \eqref{eq:FL_Convergence}.
\else
We set the step-size $\eta_t$ to take the form $\eta_t=\frac{\beta}{t+\varphi}$ for some $\beta>0$ and $\varphi \geq \max (4\rho_s \beta, \tau)$, for which $\eta_t\leq\frac{1}{4\rho_s}$ and $\eta_t\leq 2 \eta_{t+\tau}$, implying that \eqref{eq:expected_distance_bound}, \eqref{eq:g_functions_bound} and \eqref{eq:v_w_functions_bound} hold. 
Under such settings, in  \cite[App. C]{shlezinger2020uveqfed} is it proved by induction that for $\lambda\geq\max\left(\frac{1+\beta^2b}{\beta\rho_c},\varphi\delta_0\right)$, it holds that $\delta_t\leq\frac{\lambda}{t+\varphi}$ for all integer $t\geq0$.
Finally, the smoothness of the objective \ref{itm:obj_smooth} implies that
\begin{align}\label{eq:smoothness_implies}
    \E\{F(\vw_t)-F(\vw^{\rm opt})\}\leq\frac{\rho_s}{2}\delta_t\leq\frac{\rho_2\lambda}{2(t+\varphi)}.
\end{align}
Setting $\beta=\frac{\tau}{\rho_c}$ results in $\varphi\geq\tau\max(1,4\rho_s/\rho_c)$ and $\lambda\geq\max(\frac{\rho^2_c+\tau^2b}{\tau\rho_c},\varphi\delta_0)$, once substituted into \eqref{eq:smoothness_implies}, proves \eqref{eq:FL_Convergence}.
\fi
\fi
\end{appendix}

\bibliographystyle{IEEEtran}
\bibliography{IEEEabrv,bib}

\end{document}